%% file: iclr2027_conference.tex
\PassOptionsToPackage{table}{xcolor}

\documentclass{article}

\usepackage{iclr2027_conference,times}

\input{math_commands.tex}

\usepackage{amsmath}
\usepackage{amsfonts}
\usepackage{amsthm}
\usepackage{nicefrac}
\usepackage{graphicx}
\usepackage{hyperref}
\usepackage{graphicx}
\usepackage{wrapfig}
\usepackage{subcaption}
\usepackage{booktabs}
\usepackage{multirow}
\usepackage{wrapfig}
\usepackage{xcolor}
\usepackage[most]{tcolorbox}
\usepackage{enumitem}

\usepackage{microtype}

\usepackage{url}
\usepackage{hyperref}

\definecolor{promptbrown}{RGB}{150,70,20}
\definecolor{promptgreen}{RGB}{0,110,45}
\definecolor{promptpurple}{RGB}{120,45,170}
\definecolor{promptblue}{RGB}{20,40,150}
\definecolor{promptgray}{RGB}{247,247,247}
\definecolor{codegray}{RGB}{245,245,245}

\newtheorem{theorem}{Theorem}
\newtheorem{lemma}{Lemma}

\newtheorem{assumption}{Assumption}

\title{
RIVO: Lightweight Adaptive Rerouting for Deep Research Agents
}

\author{Mingchen Li\textsuperscript{\normalfont 1}, Jiatan Huang\textsuperscript{\normalfont 2}, \textbf{Hong Yu}\textsuperscript{1}\\
\textsuperscript{1}University of Massachusetts, Amherst 
         \textsuperscript{2} University of Connecticut \\
        }

\iclrfinalcopy

\begin{document}

\maketitle

% ---------------------------------------------------------------------
% Main paper
% ---------------------------------------------------------------------

\input{1_abs.tex}
\input{2_introduction.tex}
\input{2_related_work.tex}
\input{3_preliminaries.tex}
\input{3_method.tex}
\input{4_experiment.tex}
\input{6_conclution.tex}

\bibliography{iclr2027_conference}
\bibliographystyle{iclr2027_conference}

% =====================================================================
% Appendix
% =====================================================================

\clearpage
\appendix
% =========================
% Appendix Navigation
% =========================
\input{9_appendix_navigation.tex}
\input{8_appendix.tex}
\input{7_dataset_build.tex}
\input{10_addtinial_relatedwork.tex}
\input{7_proof.tex}

\end{document}

%% file: math_commands.tex
\usepackage{amsmath,amsfonts,bm}

\def\eqref#1{equation~\ref{#1}}
\def\1{\bm{1}}

\DeclareMathAlphabet{\mathsfit}{\encodingdefault}{\sfdefault}{m}{sl}
\SetMathAlphabet{\mathsfit}{bold}{\encodingdefault}{\sfdefault}{bx}{n}

%% file: 1_abs.tex
% !TeX root = iclr2027_conference.tex

\begin{abstract}
Deep Research agents rely on long-horizon reasoning and iterative tool use to solve complex information-seeking tasks. However, adaptive decision-making over these trajectories remains challenging, as agents may stop prematurely when further evidence is still needed, continue unnecessarily after sufficient evidence has been gathered, or fail to redirect unproductive trajectories. Existing solutions either require costly model retraining and dense turn-level supervision, or rely solely on heuristic control rules that are not directly optimized for downstream research utility.
This motivates a lightweight, utility-aware alternative for deep research control. We formulate adaptive rerouting in long-horizon Deep Research as a state-conditioned trajectory correction problem. Based on this view, we introduce \textsc{RIVO}, a lightweight plug-in framework for adaptive trajectory correction.
The agent first estimates its research state from lightweight state signals and determines whether trajectory redirection is needed. 
Once triggered, the plug-in modifies the Host token logits to steer generation
toward a more promising research direction. Theoretical analysis shows that,
under a value-alignment condition and bounded local curvature, a sufficiently
small logit intervention can locally improve the expected downstream research
utility. A curated  dataset, BrowseCore, with a paired local corpus is further constructed
for efficient and controlled Deep Research experimentation. Building on this resource,
the plug-in is trained from downstream outcomes, enabling it to learn corrections that
improve subsequent research and final task performance.
Extensive experiments across multiple Deep Research benchmarks and control settings demonstrate the effectiveness of our method, showing that lightweight logit intervention can improve long-horizon research without retraining the underlying Host model
\footnote{%
The BrowseCore dataset is publicly available at
\href{https://huggingface.co/datasets/ChenAI2015/BrowseCore}{%
\raisebox{-0.15em}{\includegraphics[height=1em]{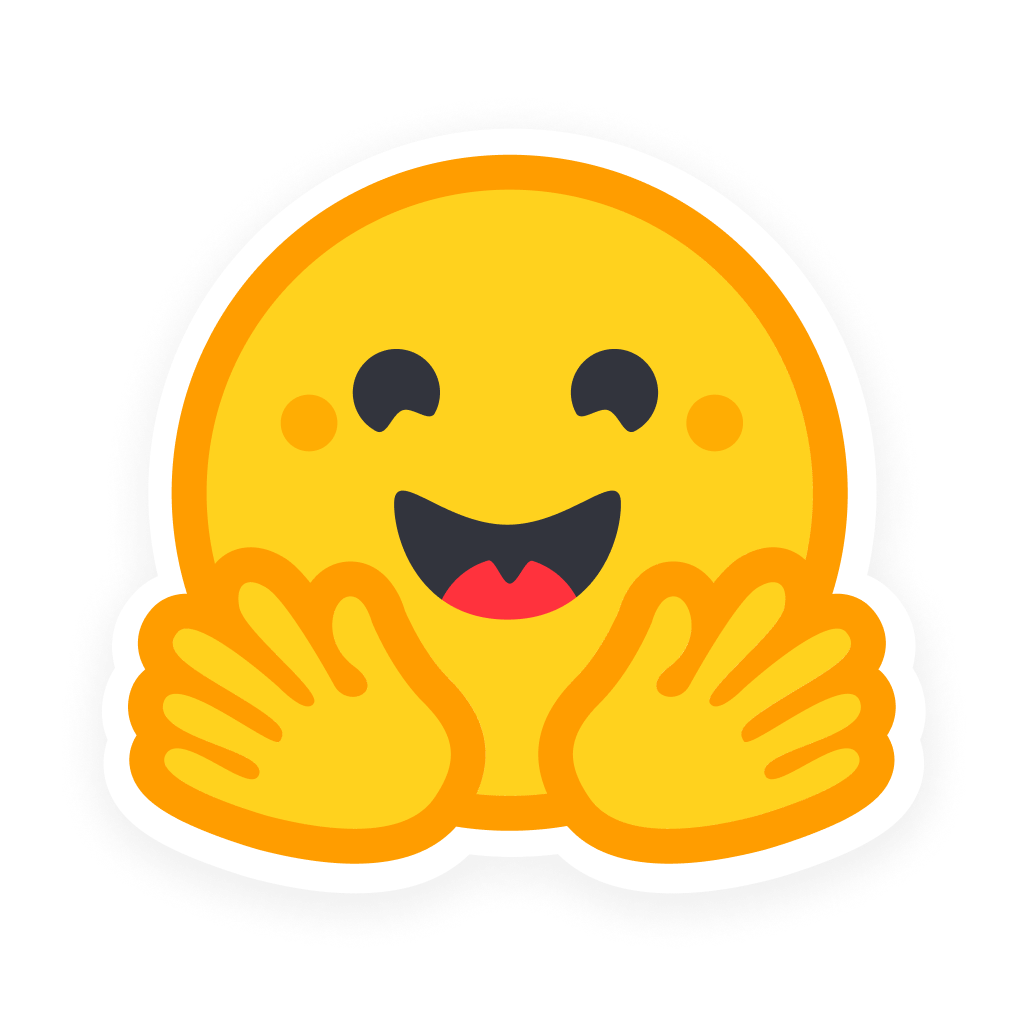}}%
}, and the code will be released on
\href{https://github.com/ToneLi/RIVO_deepresearch}{GitHub}.%
}.

\end{abstract}

% \footnote{The code is available at \url{https://anonymous.4open.science/r/RICE_PO-4F06/README.md}}.

%% file: 2_introduction.tex
% !TeX root = iclr2027_conference.tex
\section{Introduction}
Large language models (LLMs) have evolved from passive text generators into autonomous agents that interleave reasoning with external tool use, such as web search and knowledge retrieval \citep{alzubi2025open,li2026context,li2026efficient,li2026rice}. Building on this capability, \emph{Deep Research} agents perform long-horizon, iterative retrieval and reasoning to solve complex information-seeking tasks that require multi-hop evidence acquisition and synthesis\citep{dong2026s1,li2026literesearcher}. These agents are increasingly useful for applications such as web-scale question answering and patient diagnosis and clinical decision support \citep{wang2026deeper}. However, current Deep Research systems still suffer from poor search control, as agents may continue searching after sufficient evidence has been collected, stop prematurely when further evidence is still needed, or remain on an unproductive research path instead of redirecting toward more useful evidence \citep{soudani2026when}. This highlights a key limitation of current Deep Research agents.
Long-horizon research requires reliable decisions on when to stop, when to reroute,
and how to redirect the search toward more promising directions.

A natural way to improve research control is to further train the underlying Deep Research model. Supervised fine-tuning can teach explicit intervention behaviors, but it requires high-quality trajectories that specify not only when the agent should stop or reroute, but also how the research trajectory should be redirected. Collecting such supervision becomes increasingly difficult as research trajectories extend to dozens or even hundreds of interaction steps \citep{ICLR2024_29e8437d}. Reinforcement learning offers a more flexible alternative by optimizing search behavior from task rewards, yet long-horizon optimization introduces sparse and delayed feedback, difficult credit assignment, and substantial training cost due to repeated reasoning and retrieval rollouts \citep{wang2026information}. These costs become particularly significant when each step invokes large local retrievers or external APIs, and the resulting policy is typically coupled to a specific research backbone, requiring re-optimization when the underlying system changes. Training-free approaches avoid such retraining by relying on prompting or predefined internal and retrieval signals \citep{yao2023react,li2026stop}. However, these heuristic or proxy signals are not directly optimized for downstream research reward, and therefore cannot explicitly learn the long-term utility of an intervention,  whether the agent should stop, whether rerouting is beneficial, and, critically, which alternative search direction should be pursued.  These limitations naturally raise the following question:
\emph{Can we learn a lightweight, plug-and-play research controller that improves decision-making in long-horizon Deep Research without retraining the underlying agent or requiring substantial computational resources?}

Motivated by recent advances in inference-time logit reweighting for frozen language models \citep{pmlr-v267-hiranandani25a}, we formulate adaptive control in long-horizon Deep Research as a \emph{state-conditioned trajectory correction} problem. Based on this formulation, we introduce Reward-Learned Logit Intervention for Value Optimization (\textsc{RIVO}), a lightweight plug-in framework for adaptive trajectory correction. Rather than retraining the underlying research model, \textsc{RIVO} estimates the current research state from lightweight state signals to determine whether intervention is needed. When trajectory redirection is triggered, the plug-in applies a token-level correction to the frozen model logits, steering generation toward a more promising research direction. \textsc{RIVO} is trained from downstream trajectory outcomes, allowing it to learn interventions that improve subsequent research and final task performance.

The key component of \textsc{RIVO}  is a sparse logit intervention for trajectory correction.
A lightweight plug-in generates a low-rank residual correction that is sparsely
applied to the Top-$K$ selected logit dimensions, while the Host model, i.e.,
the underlying Deep Research model, remains frozen. This
intervention directly changes selected token preferences and redirects subsequent
retrieval without end-to-end retraining. To support model development and controlled evaluation, we construct
BrowseCore, a Deep Research dataset with a paired local corpus. Across the evaluated Deep Research benchmarks, RIVO
outperforms strong baselines by up to 4.36 points. Further ablation studies validate the key components of RIVO.
For example, on BrowseCore, removing the reroute predictor reduces the Success Rate
by 5.67 points, while removing the logit intervention leads to a 2.48-point drop. Finally, theoretical analysis studies the effect of logit
intervention and shows that sufficiently small, value-aligned interventions can
locally improve expected downstream research utility.

\begin{itemize}

    \item We construct a curated question answer dataset paired with a local corpus to
support efficient and controlled training and evaluation of long-horizon Deep Research agents.

 \item We formulate adaptive rerouting in long-horizon Deep Research as a
\emph{state-conditioned trajectory correction} problem and introduce
\textsc{RIVO}, a lightweight plug-in framework for targeted trajectory redirection.
Theoretical analysis shows that sufficiently small, value-aligned logit interventions
can locally improve expected downstream research utility.
    \item We conduct extensive experiments across multiple Deep Research benchmarks and control settings. The results demonstrate the effectiveness of \textsc{RIVO} in improving research performance while avoiding end-to-end retraining of the underlying research model.
\end{itemize}

%% file: 2_related_work.tex
% !TeX root = iclr2027_conference.tex
\section{Related Work}
\paragraph{Deep Research Agents and Research Control.}
Recent Deep Research agents, including LiteResearcher-4B~\citep{li2026literesearcher},
Marco-DeepResearch-8B~\citep{zhu2026marco}, and
Tongyi-DeepResearch-30B-A3B~\citep{tongyidr},
develop long-horizon research capabilities through supervised and reward-based
post-training. Despite strong overall performance, these agents still exhibit
limitations in intermediate research control: they may terminate at suboptimal
points, continue after meaningful progress has stalled, or fail to redirect
unproductive trajectories~\citep{soudani2026when}. Recent work has begun to
study such decisions. CaRT~\citep{DBLP:journals/corr/abs-2510-08517} learns
when information gathering should terminate from counterfactual trajectories,
while LSP~\citep{liu-wang-2025-answer} studies adaptive early stopping based on
reasoning convergence. However, these methods are not designed for long-horizon
Deep Research. Extending such control requires either fine-grained supervision
over long trajectories or repeated long-horizon optimization, motivating
lightweight control without retraining the full research model.

\paragraph{Plug-in Adaptation for LLM Agents.}
Several approaches improve agent behavior through inference-time control.
ReAct~\citep{yao2023react} interleaves reasoning and environment actions through
prompting, while ASAG~\citep{li2026stop} uses attention states to adaptively
terminate unnecessary reasoning. RAAC~\citep{soudani2026when} further introduces
retrieval-aware signals to detect stagnation and adapt long-horizon Deep Research
trajectories. In contrast, PPDPP~\citep{deng2024plug} learns a lightweight
plug-in to guide the policy of a frozen LLM, but targets proactive dialogue
rather than long-horizon research. A complementary line of work directly
intervenes in model logits: ThinkMerge~\citep{wang2026think} combines parallel
reasoning trajectories through logit averaging, while SWAI~\citep{an2026steering}
steers generation through training-free logit modification. Our method combines
these directions by learning a lightweight plug-in that modifies frozen model
logits only when trajectory rerouting is needed.

\paragraph{Training Resources for Deep Research.}
Existing Deep Research benchmarks provide limited resources for training
long-horizon research control. BrowseComp-Plus~\citep{chen2025browsecomp}
and DRBench~\citep{abaskohi2026drbench} are primarily designed for evaluation
and do not provide dedicated training sets for learning trajectory-level
interventions. OpenResearcher~\citep{li2026openresearcher} instead releases
96K synthetically generated long-horizon research trajectories together with
an approximately 11B-token retrieval corpus. While this provides a valuable
training resource, its large-scale retrieval environment can be computationally
expensive for repeated policy optimization, particularly when coupled with a
large dense retriever. To support efficient and controlled training of research
intervention, we construct a compact training resource that pairs high-quality
question--answer examples with a dedicated local corpus, enabling repeated
long-horizon rollouts without requiring large-scale retrieval infrastructure.

%% file: 3_preliminaries.tex
\section{Problem Setup}

We consider a long-horizon Deep Research setting in which an LLM agent
interacts with a retrieval environment over multiple research
turns. Given an initial user question $q_0$, the agent maintains an
interaction history $\mathcal{F}_{t-1}$ containing previous reasoning,
tool calls, retrieved evidence, and tool observations. At each turn
$t=1,2,\ldots,T$, the frozen Host observes $q_0$ and
$\mathcal{F}_{t-1}$ and produces either a tool action $a_t$ or a final
answer.
A tool action is executed by the environment and returns an observation
$o_t = \mathcal{E}(a_t)$, which is appended to the interaction history. 
A complete research episode is represented as
$
\tau
=
\{(a_1,o_1),(a_2,o_2),\ldots,(a_T,o_T)\},
$
followed by a final answer. The environment assigns a task-level reward
$R(\tau)$ according to the quality of the final research outcome. Among trajectory-level control decisions, in  this work, we mainly study adaptive rerouting. At such a
state, the agent generates a new retrieval direction $q_t$, which changes
the subsequent research trajectory.
The quality of a rerouting decision is determined by its downstream effect
rather than by the reroute query alone. We quantify the effect of a reroute query $q_t$ on subsequent research as
\begin{equation}
Q_t(q_t)
=
\mathbb{E}
\left[
R(\tau_{1:T})
\mid
q_t,\mathcal{F}_{t-1}
\right],
\end{equation}
where $Q_t(q_t)$ represents the expected research utility after selecting
$q_t$ at turn $t$. Our objective is to learn a lightweight trajectory
correction mechanism that increases this utility while keeping the host
model frozen.

%% file: 3_method.tex
% !TeX root = iclr2027_conference.tex
\section{RIVO}
\label{sec:method}

\subsection{The Challenge of Efficient Research Control}

Improving long-horizon research control through supervised fine-tuning or
reinforcement learning can be computationally expensive. Supervised approaches
require fine-grained intervention trajectories, while reward-based optimization
depends on repeated long-horizon reasoning and retrieval rollouts with sparse and
delayed feedback. Training-free methods avoid model retraining, but typically rely
on predefined heuristics or proxy signals that are not directly optimized for
downstream research utility.
A lightweight plug-in offers a more efficient alternative, but requiring a small
model to independently generate complex research directions places substantial
generative burden on the controller. We instead retain the frozen Host as the
primary generator and use the plug-in to apply residual corrections in logit space.
State-guided control determines when intervention is necessary, while the plug-in
learns how to modify the Host generation from downstream trajectory-level outcomes.

    Motivated by this view, we propose \textsc{RIVO}, a lightweight framework for state-conditioned trajectory correction. As illustrated in Figure~\ref{fig:method}, \textsc{RIVO} follows a simple pipeline with three components. First, State-Guided Control monitors the evolving research state using lightweight signals and determines whether the current trajectory should stop, continue, or be redirected. Second, when rerouting is triggered, Logit Intervention activates a lightweight plug-in that applies a sparse residual correction to the frozen Host logits, steering generation toward a new retrieval direction without updating the Host parameters. Third, Optimization and Inference trains the plug-in from downstream trajectory-level feedback and selectively applies the learned intervention during inference. Together, these components allow \textsc{RIVO} to adaptively correct unproductive research trajectories while keeping the Host model frozen and limiting intervention to states where rerouting is needed. 
% \subsection{Overall Framework}
\label{sec:overview}

\begin{figure*}[t]
    \centering
    \includegraphics[
        width=\textwidth,
    ]{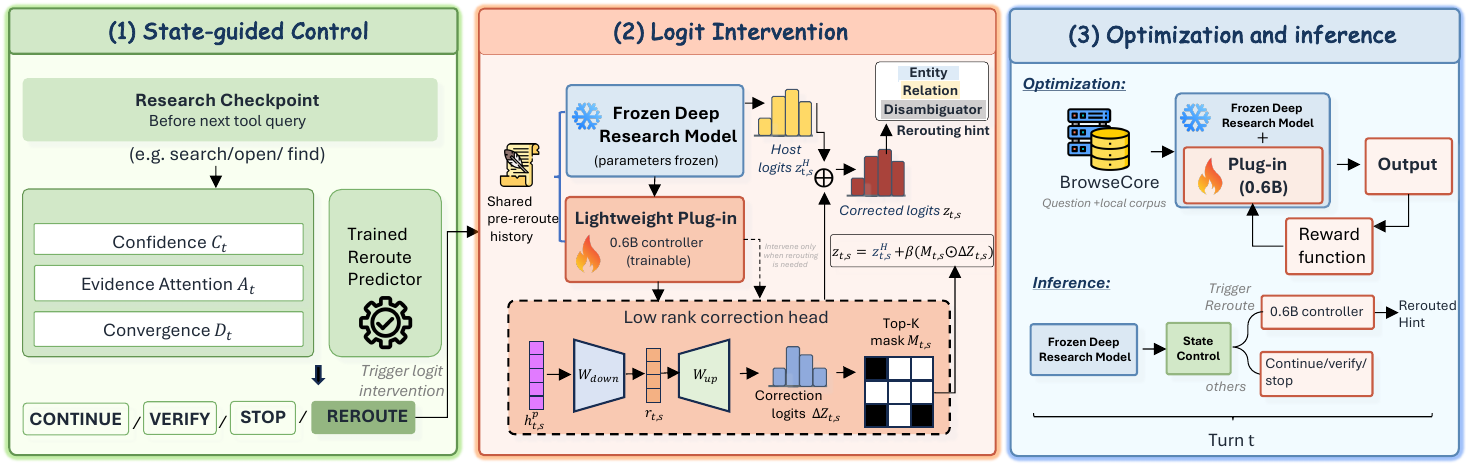}
    \caption{Overview of \textbf{RIVO}.
RIVO consists of three stages: state-guided control determines whether the current Deep Research trajectory should continue, stop, verify or reroute; logit intervention uses a lightweight plug-in to modify selected Host logits when rerouting is triggered; and trajectory-level optimization trains the plug-in while keeping the Host model frozen.
At inference time, the learned intervention is activated only when rerouting is required.}
    \label{fig:method}
    \vspace{-10pt}
\end{figure*}

\subsection{State-Guided Control}
\label{sec:initialization}

Figure~\ref{fig:method} summarizes the state-guided control process in
RIVO. At each retrieval checkpoint, RIVO evaluates the current research state
using three complementary signals: confidence $C_t$, convergence $D_t$, and
evidence attention $A_t$. Confidence identifies states that are sufficiently
reliable to stop, convergence determines whether low-confidence states require
further verification, and evidence attention distinguishes between continuing
the current trajectory and redirecting it. When the state-guided controller
selects \textsc{Continue}, a Trained Reroute Predictor performs an
additional learned check and may promote the decision to \textsc{Reroute}.
Once rerouting is triggered by either component, the plug-in applies token-level
logit correction to the frozen Host to redirect subsequent retrieval. 

% Detailed decision rules and triggering conditions are provided in Appendix~\ref{app:rivo}.
% % \subsubsection{Trajectory States and Three-Way Control Labels}
% \label{sec:control_labels}
\paragraph{Trained Reroute Predictor.}
We collect successful multi-turn Deep Research trajectories and extract a
turn-level state $s_t=(q,h_{<t},a_t)$, where $q$ is the original question,
$h_{<t}$ is the preceding research history, and $a_t$ is the current tool
action together with its result. A capable language model annotates each state
as \textsc{Continue} or \textsc{Reroute} based on the
accumulated evidence, current research direction, and gold answer.
This process yields 29,252 turn-level examples from 1,475 questions. We split
the data by question ID into 26,351 training examples and 2,901 test examples,
ensuring that all turns from the same question remain in a single partition.
We use the training split to fine-tune  Qwen3-0.6B-Base as a
lightweight reroute predictor, denoted by $P$.  An action-prediction head produces a distribution
$p_P(\cdot\mid s_t)$ over
$\{\textsc{Continue},\textsc{Reroute}\}$, where
$p_P(\textsc{Reroute}\mid s_t)$ denotes the predicted rerouting probability.
The same plug-in backbone is later used with a separate low-rank residual
correction head for logit intervention. Detailed annotation rules, prompts,
and data statistics are provided in Appendix~\ref{trajactory_label}.

\paragraph{Research-state estimation.}
At each retrieval checkpoint, RIVO estimates the current research state using
three lightweight signals. Provisional-answer confidence $C_t$ measures the
Host's confidence in its current answer using a fixed answer probe.
Evidence attention compares the attention assigned to the current and preceding
retrieval segments, while attention convergence measures the change in attention
entropy relative to the initial retrieval state. We define
\begin{equation}
C_t=\frac{1}{m}\sum_{i=1}^{m}p_H(y_i\mid X_t,I,y_{<i}),
\qquad
\Delta A_t=A_t^{\mathrm{cur}}-A_t^{\mathrm{prev}},
\qquad
D_t=\frac{H_t-H_1}{|H_1|}.
\label{eq:state_signals}
\end{equation}
Here, $X_t$ denotes the research dialogue at checkpoint $t$, $I$ is a fixed
answer-probe suffix, $A_t$ denotes the attention assigned to retrieved evidence,
and $H_t$ denotes the aggregated attention entropy. Larger $C_t$ indicates
higher answer confidence, $\Delta A_t<0$ indicates that newly retrieved evidence
receives less attention than preceding evidence, and $D_t<\tau_H$ indicates
that the attention state has converged. Detailed definitions and computation are
provided in Appendix~\ref{con:light_signal}.

\paragraph{State-guided control.}
The three signals determine the trajectory-level control action. A
high-confidence provisional answer triggers \textsc{Stop}. When confidence
remains low but the attention state has converged, RIVO invokes
\textsc{Verify}. For a low-confidence and unconverged state, decreasing
evidence attention indicates an unproductive retrieval direction and triggers
\textsc{Reroute}. Otherwise, the current research trajectory
\textsc{Continue}s:
\begin{equation}
d_t=
\begin{cases}
\textsc{Stop}, &
C_t>\lambda,\\
\textsc{Verify}, &
C_t\leq\lambda \land D_t \leq\tau_H,\\
\textsc{Reroute}, &
C_t\leq\lambda \land D_t>\tau_H \land \Delta A_t<0,\\
\textsc{Continue}, &
\text{otherwise}.
\end{cases}
\label{eq:state_control}
\end{equation}

For low-confidence states initially assigned \textsc{Continue}, the trained
reroute predictor $P$ provides an additional learned check. The decision is promoted to \textsc{Reroute} when
$\hat{y}_t^{P}=\textsc{Reroute}$.
Once \textsc{Reroute} is triggered, the logit-intervention module is activated
to redirect subsequent retrieval.

% \subsubsection{Retrieval-Boundary State Estimation}
% \label{sec:asag_state}

\subsection{Logit Intervention}
\label{sec:logit_intervention}

Once \textsc{Reroute} is triggered by the above state-guided control mechanism,
RIVO activates the logit-intervention head attached to $P$. The head is trained
from downstream trajectory-level outcomes to learn residual corrections to the
Host token logits. During inference, the learned correction is applied while
the Host remains frozen, redirecting subsequent retrieval.

\paragraph{Low-rank logit correction.}
At reroute step $t$ and token position $s$, let
$\mathbf{z}_{t,s}^{H}\in\mathbb{R}^{|\mathcal{V}|}$ denote the
pre-softmax logits produced by the frozen Host, and let
$\mathbf{h}_{t,s}^{P}$ denote the hidden representation produced by the
plug-in. We project the plug-in representation through a low-rank
correction head,
\begin{equation}
\mathbf{r}_{t,s}
=
W_{\mathrm{down}}\mathbf{h}_{t,s}^{P},
\qquad
\Delta\mathbf{z}_{t,s}
=
W_{\mathrm{up}}\mathbf{r}_{t,s}.
\end{equation}
To keep the intervention sparse, we retain only the $K$
largest-magnitude entries of $\Delta\mathbf{z}_{t,s}$, where $t$ denotes the Deep Research turn at which rerouting is applied,
$s$ denotes the token position within the generated reroute query, and
$j$ indexes candidate tokens $x_j\in\mathcal{V}$ in the vocabulary. Let
$M_{t,s}$ denote the corresponding Top-$K$ mask, 
The corrected
Host logits and resulting token policy are
\begin{equation}
\mathbf{z}_{t,s}
=
\mathbf{z}^{H}_{t,s}
+
\beta
\left(
M_{t,s}\odot\Delta\mathbf{z}_{t,s}(\theta)
\right),
\qquad
\pi_{\beta,\theta}
\left(
x_j\mid\mathcal{F}_{t,s-1}
\right)
=
\operatorname{softmax}
\left(
\mathbf{z}_{t,s}
\right)_j .
\label{eq:logit_intervention}
\end{equation}
Here, $\beta$ controls the intervention strength. In this way, RIVO
changes only selected token preferences while leaving the Host parameters
unchanged.
% and the resulting token policy is
% \begin{equation}
% \pi_{\beta,\theta}
% \left(
% x_j\mid\mathcal{F}_{t,s-1}
% \right)
% =
% \operatorname{softmax}
% \left(
% \mathbf{z}_{t,s}
% \right)_j
% =
% \frac{
% \exp\!\left(
% z_{t,s,j}^{H}
% +
% \beta g_{t,s,j}(\theta)
% \right)
% }{
% \sum_{k\in\mathcal{V}}
% \exp\!\left(
% z_{t,s,k}^{H}
% +
% \beta g_{t,s,k}(\theta)
% \right)
% }.
% \label{eq:token_policy}
% \end{equation}
\paragraph{Theoretical justification.}
We provide a local analysis of how the learned logit intervention affects
the expected downstream value of a rerouting decision.
Specifically, we characterize the first-order change in trajectory-level
reward induced by perturbing the Host logits, and derive a sufficient
condition under which the intervention improves the expected downstream
value. Let $q_t=(x_{t,1},\ldots,x_{t,L_t})$ denote the complete reroute query
generated when rerouting is triggered at turn $t$. Here, $q_t$ denotes a
general textual representation of the rerouting direction; in RIVO, it is
instantiated as a structured rerouting hint that is subsequently rewritten by
the frozen Host into a search query.
By autoregressive factorization,
\begin{equation}
\pi_{\beta,\theta}
(q_t\mid\mathcal{F}_{t-1})
=
\prod_{s=1}^{L_t}
\pi_{\beta,\theta}
(x_{t,s}\mid\mathcal{F}_{t,s-1}).
\end{equation}
When $\beta=0$, the logit intervention vanishes and the policy reduces to
the original Host policy, i.e., $\pi_{0,\theta}=\pi_H$.
For a complete reroute query $q_t$, we define its expected downstream
research utility as
$
Q_t(q_t)
=
\mathbb{E}
\left[
R(\tau_{1:T})
\mid
q_t,\mathcal{F}_{t-1}
\right]
$.
The corresponding expected utility under intervention strength $\beta$ is
$
J_t(\beta)
=
\mathbb{E}_{q_t\sim
\pi_{\beta,\theta}(\cdot\mid\mathcal{F}_{t-1})}
\left[
Q_t(q_t)
\right].
$
We further define the first-order query-level probability shift as
\begin{equation}
S_t(\theta;q_t)
=
\left.
\frac{\partial}{\partial\beta}
\log
\pi_{\beta,\theta}
(q_t\mid\mathcal{F}_{t-1})
\right|_{\beta=0}.
\label{eq:query_shift}
\end{equation}
Then,
\begin{equation}
\left.
\frac{\partial J_t(\beta)}
{\partial\beta}
\right|_{\beta=0}
=
\operatorname{Cov}_{q_t\sim
\pi_H(\cdot\mid\mathcal{F}_{t-1})}
\left(
Q_t(q_t),
S_t(\theta;q_t)
\right).
\label{eq:first_order_value}
\end{equation}

\begin{theorem}[Local Improvement under Value-Aligned Logit Intervention]
\label{thm:local_improvement}

Let
\begin{equation}
\Gamma_t
:=
\operatorname{Cov}_{q_t\sim
\pi_H(\cdot\mid\mathcal F_{t-1})}
\left(
Q_t(q_t),
S_t(\theta;q_t)
\right).
\end{equation}

Suppose the intervention direction is locally aligned with downstream value,
such that $\beta\Gamma_t>0$, and there exists $B_J>0$ satisfying
$
|J_t''(u)|\leq B_J
$
for all $u$ between $0$ and $\beta$. Then
\begin{equation}
J_t(\beta)-J_t(0)
\geq
\beta\Gamma_t-\frac{B_J}{2}\beta^2.
\end{equation}

Consequently, for any nonzero $\beta$ satisfying
$
\beta\Gamma_t>0
$, $
0<|\beta|<\frac{2|\Gamma_t|}{B_J}
$, we have $
J_t(\beta)>J_t(0).
$
\end{theorem}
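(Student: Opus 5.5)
The plan is a second-order Taylor expansion of $J_t$ about $\beta=0$, with the first derivative taken from the covariance identity in \eqref{eq:first_order_value}. The expansion needs $J_t$ to be twice differentiable on the closed segment between $0$ and $\beta$, which I will argue is automatic in this setting.

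\textbf{Regularity.} For a fixed reroute context there are finitely many candidate queries $q_t$ (bounded length, finite vocabulary). So
\begin{equation*}
J_t(\beta)=\sum_{q_t}\pi_{\beta,\theta}(q_t\mid\mathcal F_{t-1})\,Q_t(q_t)
\end{equation*}
is a finite sum. Each term is a product of softmax probabilities whose logits $\mathbf z^H_{t,s}+\beta(M_{t,s}\odot\Delta\mathbf z_{t,s})$ are affine in $\beta$. I will treat the mask $M_{t,s}$ and the correction $\Delta\mathbf z_{t,s}$ as fixed along the perturbation path, since they depend on $\theta$ and the plug-in state, not on $\beta$. Under that convention $J_t$ is $C^\infty$ in $\beta$. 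This is the step I regard as the main obstacle: the hypothesis $|J_t''(u)|\le B_J$ only makes sense once twice differentiability is established.

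\textbf{First derivative.} Differentiate under the finite sum using the log-derivative trick:
\begin{equation*}
J_t'(0)=\mathbb E_{\pi_H}\big[Q_t\,S_t\big].
\end{equation*}
Because $\sum_{q_t}\pi_{\beta,\theta}(q_t\mid\mathcal F_{t-1})=1$ for every $\beta$, differentiating at $0$ gives $\mathbb E_{\pi_H}[S_t]=0$. Hence $J_t'(0)$ equals the covariance, i.e., $J_t'(0)=\Gamma_t$, recovering \eqref{eq:first_order_value}.

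\textbf{Main inequality.} Apply Taylor's theorem with Lagrange remainder:
\begin{equation*}
J_t(\beta)=J_t(0)+\beta\Gamma_t+\tfrac12 J_t''(\xi)\beta^2
\end{equation*}
for some $\xi$ between $0$ and $\beta$. The bound $|J_t''(\xi)|\le B_J$ gives $\tfrac12 J_t''(\xi)\beta^2\ge-\tfrac{B_J}{2}\beta^2$, which is the claimed inequality. The alignment assumption $\beta\Gamma_t>0$ is not needed for this step.

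\textbf{Consequence.} Since $\beta\Gamma_t>0$, we have $\beta\Gamma_t=|\beta||\Gamma_t|$. Therefore
\begin{equation*}
\beta\Gamma_t-\tfrac{B_J}{2}\beta^2=|\beta|\Big(|\Gamma_t|-\tfrac{B_J}{2}|\beta|\Big).
\end{equation*}
This is strictly positive exactly when $0<|\beta|<2|\Gamma_t|/B_J$. Combined with the main inequality, this gives $J_t(\beta)>J_t(0)$.
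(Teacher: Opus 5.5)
Your proposal is correct and follows essentially the same route as the paper. You obtain $J_t'(0)=\Gamma_t$ from the score-function identity together with $\mathbb{E}_{\pi_H}[S_t]=0$, which is the paper's first-order covariance lemma (Lemma~2), and then apply Taylor's theorem with Lagrange remainder, using $J_t''(\xi)\ge -B_J$ and invoking alignment only in the final positivity step. Your explicit smoothness argument (a finite sum of softmax products whose logits are affine in $\beta$) is a small addition that the paper leaves implicit; like the paper's Lemma~2, it also tacitly requires $Q_t(q_t)$ to be independent of $\beta$, i.e., the downstream continuation policy must be held fixed.
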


The theorem shows that when the intervention direction is aligned with
higher downstream value and its magnitude is sufficiently small, the
logit intervention locally improves the expected downstream value relative
to the frozen Host policy. Full assumptions, lemmas, and proofs are
provided in Appendix~\ref{proof}.
\subsection{Optimization and Inference}
\label{sec:optimization}

We optimize the lightweight plug-in using trajectory-level feedback while
keeping the Host model frozen. For each question $q$, we sample a group of
$G$ complete Deep Research trajectories,
$\mathcal{T}_q=\{\tau_1,\ldots,\tau_G\}$.
Whenever \textsc{Reroute} is triggered, the plug-in $P$ applies the logit
intervention described in Section~\ref{sec:logit_intervention}. Let
$\theta_P^{\mathrm{logit}}$ denote the trainable parameters of the
logit-correction head of $P$. At token position $s$, the corrected Host
logits are
$
\mathbf{z}_{t,s}
=
\mathbf{z}^{H}_{t,s}
+
\beta
\left(
M_{t,s}\odot
\Delta\mathbf{z}_{t,s}
(\theta_P^{\mathrm{logit}})
\right),
\label{eq:grpo_logit_update}
$
where the Host logits $\mathbf{z}^{H}_{t,s}$ remain fixed.
After each trajectory terminates, we use its downstream reward, as described in
Appendix~\ref{app:reward}, to compute a group-relative advantage $\hat A_i$ and optimize
$\theta_P^{\mathrm{logit}}$ with GRPO:
\begin{equation}
\mathcal{L}_{\mathrm{GRPO}}
=
-
\frac{1}{\sum_{i,j}m_{i,j}}
\sum_{i,j}
m_{i,j}
\min
\left(
\rho_{i,j}\hat A_i,\,
\operatorname{clip}
(\rho_{i,j},1-\epsilon,1+\epsilon)\hat A_i
\right),
\end{equation}
where $\rho_{i,j}$ is the policy ratio and $m_{i,j}$ selects trainable
rerouting tokens. Gradients are applied only to
$\theta_P^{\mathrm{logit}}$, while the Host remains frozen.
At inference time, the learned correction is activated only when
\textsc{Reroute} is triggered. The resulting rerouting hint is rewritten
by the frozen Host into a new search query, and the retrieved evidence is
fed back into the next research checkpoint.

%% file: 4_experiment.tex
% !TeX root = iclr2027_conference.tex
\section{Experiments}
\paragraph{Datasets.}
We construct \textsc{BrowseCore}, a curated Deep Research dataset paired with
a local retrieval corpus for plug-in training and controlled evaluation.
BrowseCore contains 2,870 questions for training, 359 for development, together with a   test set of 359 questions,
with each question associated with verified supporting evidence and hard-negative
documents. We evaluate RIVO on the 359 question test set. To assess
generalization beyond BrowseCore, we additionally evaluate on all 830
BrowseComp-Plus questions~\citep{chen2025browsecomp} and the 100-task
DRBench benchmark~\citep{abaskohi2026drbench}. Detailed BrowseCore
construction procedures are provided in Appendix~\ref{con:ourdataset}.

\paragraph{Baselines and evaluation setup.}
We evaluate RIVO on two representative Deep Research systems,
 LiteResearcher-4B \citep{li2026literesearcher} and Marco-DeepResearch-8B \citep{zhu2026marco}.
We compare against four baselines for inference-time trajectory control.
ReAct~\citep{yao2023react} interleaves reasoning with tool use through
prompting without an explicit trajectory controller.
ASAG~\citep{li2026stop} uses attention-state signals to adaptively
terminate unnecessary reasoning.
RAAC~\citep{soudani2026when} incorporates retrieval-aware signals to
detect stagnation and adapt long-horizon research trajectories.
PPDPP~\citep{deng2024plug} uses a lightweight plug-in to guide the
policy of a frozen language model.
For BrowseComp-Plus and BrowseCore, we follow the OpenResearcher evaluation
protocol and report Success Rate (SR) using a GPT-4-based judge for final-answer
correctness. For DRBench, we use the official \texttt{score\_report}
implementation with gpt-4o-mini as the judge model, and report
Insight Recall (IR), Factuality (Fact.), Report Quality (RQ), Distractor
Avoidance (DA), and Harmonic Mean (HM), with HM serving as the overall
aggregate metric. All methods use Qwen3-Embedding-8B~\citep{zhang2025qwen3} as the retriever for a controlled comparison.

\subsection{Main Results}

\begin{table*}[t]
\centering
\caption{
Main results on BrowseComp-Plus, BrowseCore, and DRBench.
SR denotes Success Rate. For DRBench, IR, Fact, DA, RQ, and HM denote
Insight Recall, Factuality, Distractor Avoidance, Report Quality, and
Harmonic Mean, respectively. Higher is better for all metrics.
}
\label{tab:main_results}

\small
\renewcommand{\arraystretch}{1.15}
\setlength{\tabcolsep}{5.5pt}

\resizebox{\textwidth}{!}{
\begin{tabular}{llccccccc}
\toprule
\textbf{Deep Research Model}
& \textbf{Method}
& \multicolumn{1}{c}{\textbf{BrowseComp-Plus}}
& \multicolumn{1}{c}{\textbf{BrowseCore}}
& \multicolumn{5}{c}{\textbf{DRBench}} \\

\cmidrule(lr){3-3}
\cmidrule(lr){4-4}
\cmidrule(lr){5-9}

& & \textbf{SR $\uparrow$} & \textbf{SR $\uparrow$} & \textbf{IR $\uparrow$} & \textbf{Fact $\uparrow$} & \textbf{DA $\uparrow$} & \textbf{RQ $\uparrow$} & \textbf{HM $\uparrow$} \\

\midrule

% =========================================================
% LiteResearcher-4B
% =========================================================
\multirow{6}{*}{\textbf{LiteResearcher-4B}}

& Source & 32.29 & 53.48 & 5.24 & 54.47 & 99.90 & 81.23 & 17.27 \\
& ASAG & 33.61 & 54.60 & 11.76 & 87.00 & 99.57 & 76.60 & 33.43 \\
& RAAC & 33.73 & 54.04 & 6.41 & 51.52 & 99.63 & 81.89 & 20.23 \\
& PPDPP & 32.65 & 55.15 & 3.34 & 73.14 & 99.63 & 76.74 & 11.91 \\
& ReAct & 31.57 & 54.32 & 8.76 & 51.34 & 99.60 & 80.79 & 25.64 \\
& \textsc{RIVO(ours)}    & \textbf{36.14} & \textbf{59.30} & 12.61 & 81.85 & 99.54 & 76.36 & \textbf{34.88} \\

\midrule

% =========================================================
% Marco-DeepResearch-8B
% =========================================================
\multirow{6}{*}{\textbf{Marco-DeepResearch-8B}}

& Source & 31.70 & 48.75 & 15.95 & 82.49 & 99.08 & 72.74 & 40.54 \\
& ASAG & 29.78 & 50.42 & 12.43 & 74.40 & 99.24 & 74.90 & 34.09 \\
& RAAC & 32.53 & 52.00 & 15.21 & 79.00 & 98.82 & 71.58 & 39.02 \\
& PPDPP & 33.37 & 50.97 & 15.22 & 79.81 & 98.97 & 72.52 & 39.16 \\
& ReAct &32.90& 50.14 & 16.09 & 79.52 & 99.03 & 72.22 & 40.53 \\
& \textsc{RIVO (ours) } & \textbf{33.98} & \textbf{52.40} & 18.56 & 85.55 & 99.29 & 74.40 & \textbf{44.90} \\

\bottomrule
\end{tabular}
}
\end{table*}

 Table~\ref{tab:main_results} compares RIVO with
ReAct~\citep{yao2023react}, ASAG~\citep{li2026stop},
RAAC~\citep{soudani2026when}, and PPDPP~\citep{deng2024plug}
across three benchmarks and two Deep Research models.
RIVO achieves the strongest overall result across all six model--benchmark
settings. For example, with LiteResearcher-4B, it improves over the strongest
baseline by 2.41 points on BrowseComp-Plus, 4.15 points on BrowseCore, and
1.45 points in DRBench HM. The gains remain consistent when switching to
Marco-DeepResearch-8B, where RIVO also improves over the strongest baseline
across all three benchmarks. This cross-model pattern indicates that the
benefit of RIVO is not tied to a particular Host architecture, although the
magnitude of improvement varies across evaluation settings.
%%%%
We find that Marco-DeepResearch-8B exhibits a different performance profile
from LiteResearcher-4B. Despite its larger scale, Marco-DeepResearch-8B performs worse than LiteResearcher-4B on BrowseComp-Plus and BrowseCore, but substantially better on DRBench. This cross-benchmark variation indicates that the two Hosts exhibit different performance profiles, suggesting that benchmark characteristics and model-specific research behaviors may interact in nontrivial ways. On DRBench, RIVO does not achieve the highest score on every individual metric, but it achieves the highest Harmonic Mean for both Host models. With LiteResearcher-4B, RIVO reaches an HM of 34.88, compared with 33.43 for the strongest baseline, while with Marco-DeepResearch-8B, it achieves 44.90 versus 40.54. 
These results show that RIVO achieves stronger overall performance through more effective trajectory correction.

% The DRBench results further show that the improvement is not simply due to
% optimizing a single report-quality dimension. On Marco-DeepResearch-8B,
% RIVO substantially increases both insight recall and factuality while
% maintaining strong distractor avoidance and report quality, leading to the
% largest HM gain. On LiteResearcher-4B, RIVO similarly improves the aggregate
% HM despite not attaining the highest score on every individual dimension.
% Together, these results suggest that state-conditioned rerouting combined
% with learned logit intervention provides a robust mechanism for improving
% long-horizon research trajectories across different Hosts and evaluation
% objectives.

\subsection{Further Analysis}

In this section, we systematically analyze the key design choices of RIVO,
including the intervention strength $\beta$, correction scope $K$,
state-control thresholds $\lambda$ and $\tau_H$, the contribution of individual
state signals, and the rerouting design. Additional ablations in
Appendix~\ref{ablation} further examine the two functions of the 0.6B plug-in
and compare alternative controller configurations, including STOP-only,
REROUTE-only, standalone three-way control, and the variant without
\textsc{Verify}.

\begin{wrapfigure}{r}{0.42\textwidth}
    \centering
    \vspace{-8pt}
    \includegraphics[width=\linewidth]{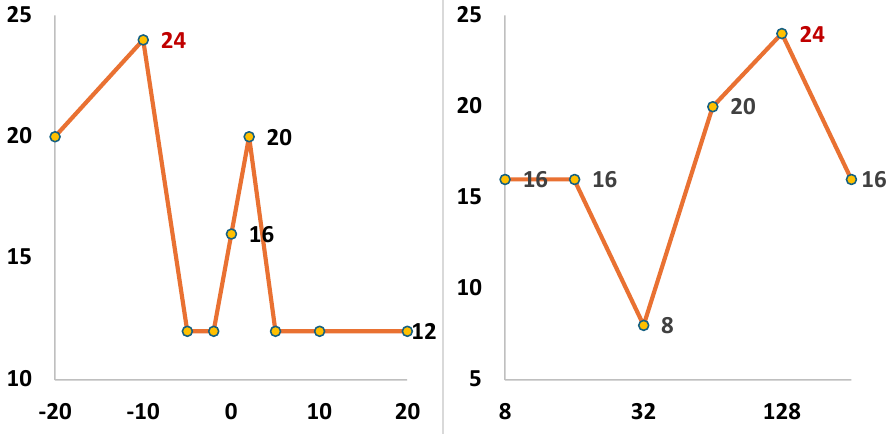}
    \caption{
    Sensitivity analysis of RIVO under different hyperparameter settings.
    }
    \label{fig:sensitivity}
    \vspace{-8pt}
\end{wrapfigure}

\paragraph{Sensitivity to Intervention Strength and Scope.}
There are two important hyperparameters in RIVO, the intervention strength
$\beta$ and the correction scope $K$. The parameter $\beta$ controls the
magnitude of the residual logit correction applied to the frozen Host,
while $K$ determines how many logits are modified during rerouting.
Together, they control how strongly and how broadly RIVO redirects the
current research trajectory. An intervention that is too weak may fail to
change the retrieval direction, whereas an overly strong or overly broad
correction may unnecessarily perturb the Host distribution.
We therefore study the sensitivity of RIVO to both parameters on the
BrowseCore test set using LiteResearcher-4B with a fixed
Qwen-Embedding-8B retriever. For questions where rerouting is triggered,
we replay the trajectory from the same pre-reroute state while varying
$\beta\in\{-20,-10,-5,-2,0,2,5,10,20\}$. The question, trajectory prefix,
checkpoint state, retrieval context, and random seed are kept fixed before
rerouting, while the reroute query and all subsequent research steps are
regenerated. As shown in Figure~\ref{fig:sensitivity}, performance varies
substantially with the intervention strength. The best result is obtained
at $\beta=-10$, achieving a success rate of 24, compared with 16
when $\beta=0$.
We further vary
$K\in\{8,16,32,64,128,256\}$ to examine the effect of intervention scope.
Performance also changes noticeably across different values of $K$, with
the best result achieved at $K=128$ with a success rate of 24.
These results demonstrate that the learned logit intervention in RIVO is effective,
with performance improving under appropriately scaled and selective corrections
to the Host logits.

\paragraph{Sensitivity to State-Control Thresholds.}

\begin{wrapfigure}{r}{0.42\linewidth}
    \centering
    \vspace{-8pt}
    \includegraphics[width=\linewidth]{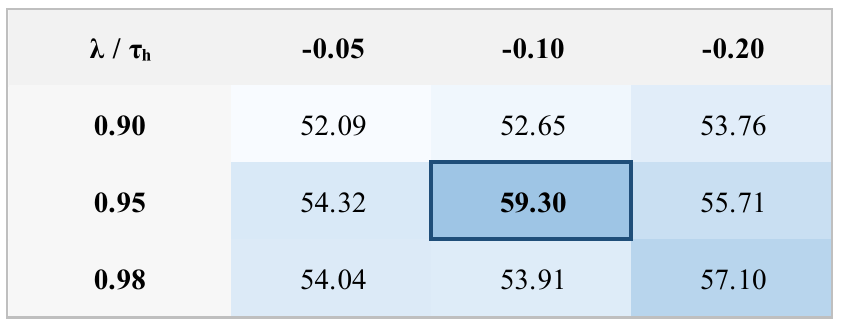}
    \caption{Sensitivity to the state-control thresholds $\lambda$ and $\tau_H$.
    Higher SR is better.}
    \label{fig:state_threshold}
    \vspace{-10pt}
\end{wrapfigure}

We further analyze the two thresholds used by the state-guided controller.
The confidence threshold $\lambda$ determines whether the current trajectory
is considered sufficiently reliable, while the convergence threshold
$\tau_H$ measures whether the research state has stabilized.
We evaluate different combinations of
$\lambda\in\{0.90,0.95,0.98\}$ and
$\tau_H\in\{-0.05,-0.10,-0.20\}$.
As shown in Figure~\ref{fig:state_threshold}, the best performance is obtained
with $\lambda=0.95$ and $\tau_H=-0.10$, achieving an SR of 59.30. A low threshold such as $\lambda=0.90$ may
over-trust uncertain provisional answers, while
a higher threshold requires stronger confidence before termination. The best
performance at $\lambda=0.95$ suggests that a moderate confidence criterion
provides a better balance between premature stopping and unnecessary continued
research.

\paragraph{Ablation of State Signals.}
\begin{wrapfigure}{r}{0.4\linewidth}
    \centering
    \vspace{-8pt}
    \includegraphics[width=\linewidth]{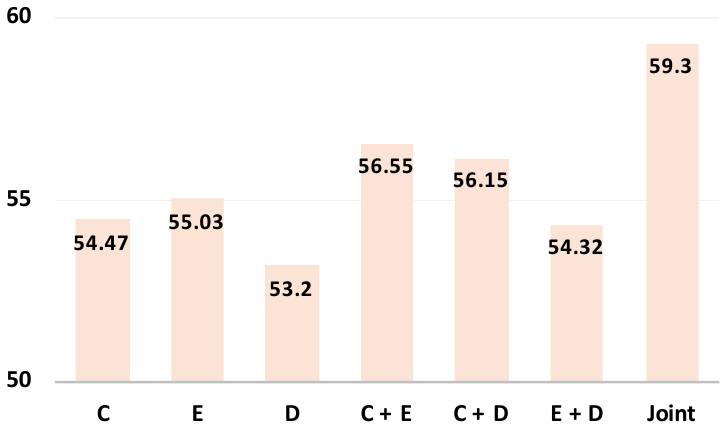}
    
    \caption{
    Ablation of state signals.
    }
    \vspace{-12pt}
    \label{fig:state_signal}
    \vspace{-8pt}
\end{wrapfigure}
We analyze the contribution of the three state signals used by RIVO in
Figure~\ref{fig:state_signal}: confidence $C^t$, evidence attention $A^t$,
and convergence $D^t$, denoted as C, E, and D in the figure, respectively.
These signals capture complementary aspects of the current research state:
$C^t$ reflects the reliability of the current answer, $A^t$ measures the
attention assigned to retrieved evidence, and $D^t$ indicates whether the
research trajectory has converged.
Among the individual signals, Evidence Attention $A^t$ performs best with
an SR of 55.03, followed by Confidence $C^t$ at 54.47 and
Convergence $D^t$ at 53.20. Pairwise combinations generally improve
performance, with $\{C^t,A^t\}$ achieving the strongest two-signal result
of 56.55 and $\{C^t,D^t\}$ reaching 56.15. In contrast,
$\{A^t,D^t\}$ obtains 54.32, suggesting that confidence provides an
important complementary signal when combined with either evidence-based
or convergence-based information.
Using all three signals achieves the best performance of 59.30, outperforming
the best single-signal and two-signal settings by 4.27 and 2.75 points,
respectively. This shows that the three signals provide complementary
information for state-guided trajectory control.

\begin{wraptable}{r}{0.46\linewidth}
    \centering
    \vspace{-8pt}
    \small
    \setlength{\tabcolsep}{4pt}
    \renewcommand{\arraystretch}{1.05}
    \begin{tabular}{lc}
        \toprule
        \textbf{Variant} & \textbf{SR (\%)} \\
        \midrule
        \multicolumn{2}{l}{\textit{Plug-in Function}} \\
        w/o Reroute Predictor (Pre) & 53.63 \\
        w/o Logit Intervention & 56.82 \\
        w/o Logit Intervention+Reroute Pre &55.71\\
        Full RIVO & \textbf{59.30} \\
        \midrule
        \multicolumn{2}{l}{\textit{Rerouting Representation}} \\
        Direct Question & 54.87 \\
        Structured Hint & \textbf{59.30} \\
        \bottomrule
    \end{tabular}
    \caption{Ablation of the rerouting design.}
    \label{tab:rerouting_ablation}
    \vspace{-8pt}
\end{wraptable}

\paragraph{Ablation of Rerouting Design.}
The 0.6B plug-in in RIVO has two distinct functions: a \emph{reroute
predictor}, which determines whether a current \textsc{Continue} state should
be promoted to \textsc{Reroute}, and a \emph{logit-intervention module}, which
modifies the frozen Host logits after rerouting is triggered to steer the next
retrieval direction.
In \textit{w/o Reroute Predictor}, the learned reroute prediction is disabled,
so rerouting can only be triggered by the state-guided controller, while the
logit intervention is retained. In \textit{w/o Logit Intervention}, the reroute
predictor is retained, but the residual correction to the Host logits is removed,
so the Host generates the subsequent rerouting direction without learned logit
modification. In \textit{w/o Logit Intervention + Reroute Predictor}, both
learned components are removed, leaving only the state-guided controller for
trajectory control.
As shown in Table~\ref{tab:rerouting_ablation}, removing the reroute predictor
reduces SR from 59.30 to 53.63, while removing the logit intervention reduces
SR to 56.82. These results show that both deciding \emph{when} to introduce an
additional reroute and learning \emph{how} to redirect the frozen Host
contribute to the final performance.

%% file: 6_conclution.tex
\section{Conclusion}

We present \textsc{RIVO}, a lightweight framework for adaptive trajectory
correction in long-horizon Deep Research. RIVO combines state-guided control with a lightweight learned reroute predictor and token-level logit intervention, enabling adaptive decisions about when trajectory redirection is needed and how the frozen Host should be redirected without retraining the Host model. We also construct \textsc{BrowseCore}, a
curated question--answer dataset paired with a compact local retrieval corpus
for controlled training and evaluation of Deep Research agents. Experiments on BrowseComp-Plus, BrowseCore, and DRBench with two different Deep Research Hosts demonstrate performance gains over existing trajectory-control baselines across all evaluated settings, while ablation studies show that the state signals, reroute prediction, and logit intervention each contribute to the overall performance. Our theoretical analysis further shows that,
under a value-alignment condition and bounded local curvature, a sufficiently
small logit intervention can locally improve the expected downstream research
utility. Overall, these results demonstrate that lightweight, state-conditioned
logit intervention provides an effective approach to improving long-horizon
Deep Research trajectories.
% \section{Limitations and Future Directions}
% \label{sec:limitations}
\section{Limitations and Future Directions}

The current framework uses relatively constrained rerouting decisions and
separates state-based control from the learned logit intervention. Future work
could explore more adaptive rerouting strategies that support multiple
interventions over a trajectory and jointly learn when to intervene, how
strongly to intervene, and which parts of the Host distribution should be
modified. Another promising direction is to make the retriever itself adaptive,
allowing the retrieval model and trajectory controller to be jointly optimized
from downstream feedback so that retrieval representations and rerouting
strategies can co-adapt throughout the research process.

%% file: 9_appendix_navigation.tex
% =========================================================
% Appendix Navigation
% =========================================================
% \clearpage
% \section*{Appendix Navigation}

% =========================================================
% A. RIVO's Adaptive Rerouting Pipeline
% =========================================================
% =========================================================
% Appendix Navigation
% =========================================================
\clearpage
\section*{Appendix Navigation}

% =========================================================
% A. Ablation Study
% =========================================================
\noindent
\hyperref[ablation]{%
\textbf{\ref*{ablation}\quad Ablation Study}}
\dotfill
\pageref*{ablation}
\par\vspace{4pt}

% =========================================================
% B. RIVO's Adaptive Rerouting Pipeline
% =========================================================
\noindent
\hyperref[app:rivo]{%
\textbf{\ref*{app:rivo}\quad RIVO's Adaptive Rerouting Pipeline}}
\dotfill
\pageref*{app:rivo}
\par\vspace{4pt}

% B.1
\hspace*{1.5em}
\hyperref[con:abliccation_stusy]{%
\ref*{con:abliccation_stusy}\quad Structural Ablations}
\dotfill
\pageref*{con:abliccation_stusy}
\par\vspace{2pt}

% B.2
\hspace*{1.5em}
\hyperref[con:light_signal]{%
\ref*{con:light_signal}\quad Lightweight State Signals}
\dotfill
\pageref*{con:light_signal}
\par\vspace{4pt}

% =========================================================
% B. Rerouting Reward Details
% =========================================================
\noindent
\hyperref[app:reward]{%
\textbf{\ref*{app:reward}\quad Rerouting Reward Details}}
\dotfill
\pageref*{app:reward}
\par\vspace{4pt}

% =========================================================
% C. Implementation Details
% =========================================================
\noindent
\hyperref[im_details]{%
\textbf{\ref*{im_details}\quad Implementation Details}}
\dotfill
\pageref*{im_details}
\par\vspace{4pt}

% =========================================================
% D. Turn-Level Supervision and Data Construction
% =========================================================
\noindent
\hyperref[trajactory_label]{%
\textbf{\ref*{trajactory_label}\quad Turn-Level Supervision and Data Construction}}
\dotfill
\pageref*{trajactory_label}
\par\vspace{4pt}

% =========================================================
% E. Prompt
% =========================================================
\noindent
\hyperref[app:prompt]{%
\textbf{\ref*{app:prompt}\quad Prompt}}
\dotfill
\pageref*{app:prompt}
\par\vspace{3pt}

% E.1
\hspace*{1.5em}
\hyperref[con:systempromt]{%
\ref*{con:systempromt}\quad Deep Research System Prompt}
\dotfill
\pageref*{con:systempromt}
\par\vspace{2pt}

% E.2
\hspace*{1.5em}
\hyperref[con:browser_tools]{%
\ref*{con:browser_tools}\quad LiteResearcher-4B Browser Tools}
\dotfill
\pageref*{con:browser_tools}
\par\vspace{2pt}

% E.3
\hspace*{1.5em}
\hyperref[con:turn-level prompt]{%
\ref*{con:turn-level prompt}\quad Turn-Level Annotation for Plug-in Training}
\dotfill
\pageref*{con:turn-level prompt}
\par\vspace{2pt}

% E.4
\hspace*{1.5em}
\hyperref[con:evidence document prompt]{%
\ref*{con:evidence document prompt}\quad Evidence Document Verification}
\dotfill
\pageref*{con:evidence document prompt}
\par\vspace{2pt}

% E.5
\hspace*{1.5em}
\hyperref[clue promt]{%
\ref*{clue promt}\quad Prompt Template for Clue Decomposition}
\dotfill
\pageref*{clue promt}
\par\vspace{4pt}

% =========================================================
% F. BrowseCore
% =========================================================
\noindent
\hyperref[con:ourdataset]{%
\textbf{\ref*{con:ourdataset}\quad BrowseCore}}
\dotfill
\pageref*{con:ourdataset}
\par\vspace{3pt}

% F.1
\hspace*{1.5em}
\hyperref[con:motivation]{%
\ref*{con:motivation}\quad Background and Motivation}
\dotfill
\pageref*{con:motivation}
\par\vspace{2pt}

% F.2
\hspace*{1.5em}
\hyperref[con:corpus contruction]{%
\ref*{con:corpus contruction}\quad Benchmark and Corpus Construction}
\dotfill
\pageref*{con:corpus contruction}
\par\vspace{2pt}

% F.2.1
\hspace*{3em}
\hyperref[con:question source]{%
\ref*{con:question source}\quad Question Source and Filtering}
\dotfill
\pageref*{con:question source}
\par\vspace{2pt}

% F.2.2
\hspace*{3em}
\hyperref[con:document source]{%
\ref*{con:document source}\quad Candidate Document Sources}
\dotfill
\pageref*{con:document source}
\par\vspace{2pt}

% F.2.3
\hspace*{3em}
\hyperref[con:Evidence source]{%
\ref*{con:Evidence source}\quad Evidence Collection and Verification}
\dotfill
\pageref*{con:Evidence source}
\par\vspace{2pt}

% F.2.4
\hspace*{3em}
\hyperref[con:pos document]{%
\ref*{con:pos document}\quad Positive and Gold Document Identification}
\dotfill
\pageref*{con:pos document}
\par\vspace{2pt}

% F.2.5
\hspace*{3em}
\hyperref[con:hard negative]{%
\ref*{con:hard negative}\quad Hard-Negative Mining}
\dotfill
\pageref*{con:hard negative}
\par\vspace{2pt}

% F.2.6
\hspace*{3em}
\hyperref[con:final]{%
\ref*{con:final}\quad Final Corpus Assembly}
\dotfill
\pageref*{con:final}
\par\vspace{4pt}

\noindent
\hyperref[sec:additional-related-work]{%
\textbf{\ref*{sec:additional-related-work}\quad Additional Related Work}}
\dotfill
\pageref*{sec:additional-related-work}
\par
% =========================================================
% G. Proof of Main Theorem
% =========================================================
\noindent
\hyperref[proof]{%
\textbf{\ref*{proof}\quad Proof of Main Theorem}}
\dotfill
\pageref*{proof}
\par

\clearpage

%% file: 8_appendix.tex
% =========================
% Add to the preamble
% =========================
% =========================
% Preamble
% =========================
% !TeX root = iclr2027_conference.tex

\section{Ablation study}

\label{ablation}
\paragraph{Ablation of the 0.6B Plug-in.}

The 0.6B plug-in in RIVO is used in two distinct stages: first, to predict whether a current \textsc{Continue} state should be promoted to \textsc{Reroute}, and second, to modify the frozen Host logits after rerouting is triggered. We separately remove each function while keeping the remaining pipeline unchanged. Removing the 0.6B reroute predictor reduces SR from 59.30 to 53.63, indicating that the learned reroute decision contributes substantially to identifying useful intervention points. When the reroute decision is retained but the 0.6B logit intervention is removed, the system directly outputs the rerouting hint without modifying the Host logits and achieves an SR of 56.82. The degradation in both variants shows that the two uses of the 0.6B plug-in are complementary: the predictor helps determine \emph{when} rerouting should occur, while the logit intervention controls \emph{how} the frozen Host is redirected toward a more effective retrieval trajectory.

\paragraph{Controller Design Ablation.}

\begin{wraptable}{r}{0.38\textwidth}
\vspace{-8pt}
\centering
\caption{Ablation study of different controller designs.}
\label{tab:controller_ablation}
\small
\setlength{\tabcolsep}{5pt}
\begin{tabular}{lc}
\toprule
\textbf{Controller Design} & \textbf{SR (\%)} \\
\midrule
STOP-only             & 56.27 \\
REROUTE-only           & 55.15 \\
Standalone 0.6B 3-way & 47.91 \\
w/o Verify        &   54.32\\
Joint controller (Ours)          & \textbf{59.30} \\
\bottomrule
\end{tabular}
\vspace{-8pt}
\end{wraptable}

We further examine how different controller designs affect long-horizon
research performance. Table~\ref{tab:controller_ablation} compares variants
that restrict the lightweight controller to a single action, allow it to
independently perform full three-way control, or combine it with the original
control mechanism.
The definitions of \textsc{STOP}-only, \textsc{REROUTE}-only, and the
standalone 0.6B three-way controller are provided in
Section~\ref{con:abliccation_stusy}. \textit{w/o Verify} removes the
\textsc{Verify} action from the state-guided controller.

\section{\textbf{RIVO}'s adaptive rerouting pipeline}
\label{app:rivo}

\subsection{Structural Ablations.}
\label{con:abliccation_stusy}
Figure~\ref{fig:structural_ablation} summarizes the control flow of RIVO
and the structural variants used in our ablation study.
In the full model (\textbf{1}), RIVO first evaluates the current research
state using three complementary signals: confidence $C_t$, convergence
$D_t$, and evidence attention $A_t$. Confidence determines whether the
current trajectory is sufficiently reliable to terminate. For
low-confidence states, convergence identifies whether the current
provisional answer should be further verified, while evidence attention
compares the usefulness of newly retrieved evidence with previously
collected evidence to determine whether the trajectory should be rerouted
or continued. If the state-guided controller selects \textsc{Continue},
a lightweight 0.6B reroute predictor performs an additional learned check
and may promote the decision to \textsc{Reroute}. Once
\textsc{Reroute} is triggered, either by the state-guided controller or
the reroute predictor, the 0.6B plug-in applies token-level logit
correction to the frozen Host, guiding it toward a new retrieval direction.
A \textsc{Continue} decision instead leaves the Host trajectory unchanged
and proceeds to the next research step.
\begin{figure*}[t]
    \centering
    \includegraphics[width=1\columnwidth]{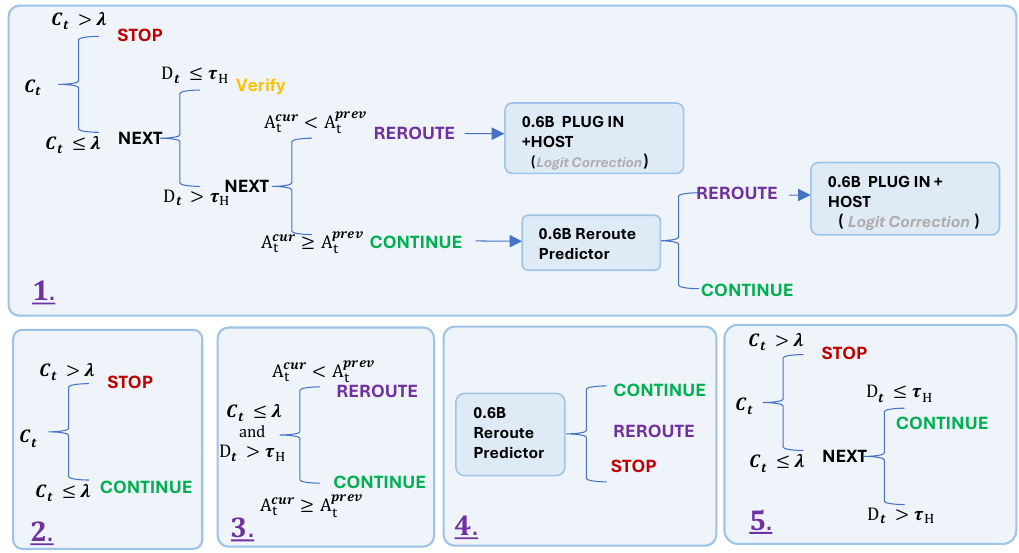}
    \caption{
    Structural ablations of \textbf{RIVO}.
    }
    \label{fig:structural_ablation}
    \vspace{-10pt}
\end{figure*}
We construct four structural variants to isolate different parts of this
control mechanism. Variant \textbf{(2)}, \textbf{STOP-only}, retains only
the confidence-based stopping mechanism. Variant \textbf{(3)},
\textbf{REROUTE-only}, retains the rule-based rerouting mechanism while
removing the other control branches. Variant \textbf{(4)},
\textbf{Predictor-only}, replaces the state-guided controller with the
lightweight 0.6B reroute predictor. Variant \textbf{(5)},
\textbf{w/o Verify}, removes the verification branch while preserving the
remaining control logic. Importantly, these variants modify only how the
control decision is made. After a \textsc{Reroute} or \textsc{Continue}
decision is produced, variants \textbf{(2)--(5)} follow exactly the same
downstream execution as the full model: \textsc{Reroute} invokes the
plug-in-based logit correction followed by a new retrieval step, whereas
\textsc{Continue} proceeds with the normal Host research trajectory.

When \textsc{Verify} is selected, the current provisional answer is
explicitly treated as an unverified hypothesis. The Host receives the
following instruction:
\begin{quote}
\small
\textit{The current provisional answer is: \{answer\}. Treat it as an
unverified hypothesis. Check it against the retrieved primary evidence,
look for contradictions, and use
\texttt{browser.open}/\texttt{browser.find} (or a new search when
necessary) to resolve any missing fact before answering.}
\end{quote}

A \textsc{Stop} decision terminates the research process and requires the
Host to return the final response in the format
\texttt{<answer>\textbackslash boxed\{answer\}</answer>}.
We additionally impose a maximum research-turn budget. If this budget is
exhausted before a normal stopping decision is reached, further tool calls
are disabled and the Host is instructed to synthesize the best-supported
answer from the evidence already collected:
\begin{quote}
\small
\textit{The research budget is exhausted. Do not call any tool. Synthesize
the best-supported final answer from the evidence already in this
conversation.  Return only \texttt{\textless{}answer\textgreater{}...\textless{}/answer\textgreater{}}.}
\end{quote}

\subsection{State-guided control}
\label{con:light_signal}
\paragraph{Provisional-answer confidence.}
At checkpoint $t$, let $X_t$ denote the serialized dialogue. We append a fixed answer-probe suffix $I$
directly at the token level and greedily decode a provisional answer
$Y_t=(y_1,\ldots,y_m)$. From the selected-token log-probabilities
$\ell_i=\log p_H(y_i\mid X_t,I,y_{<i})$, we compute
\begin{equation}
C_t
=
\frac{1}{m}\sum_{i=1}^{m}\exp(\ell_i),
\qquad
\mathrm{high}_t=\mathbf{1}[C_t>\lambda],
\end{equation}
where $m$ ends at the first complete boxed answer and $\lambda=0.95$.
$C_t$ provides a lightweight internal estimate of the Host's confidence
in its current provisional answer given the evidence available at the
current retrieval state.
\paragraph{Evidence-attention score.}
We monitor the final $L=4$ Host layers over a $W=32$ token window.
Let $a^{(l)}_{h,q,k}$ denote the attention weight from query position $q$
to token $k$ at head $h$ of layer $l$. Since a retrieval step may return
multiple pieces of evidence, we define $S_t$ as the union of all evidence-token
spans accumulated within the current retrieval segment before checkpoint $t$.
For an evidence segment $S$, we compute
\begin{equation}
A_t(S)
=
\frac{1}{L}
\sum_{l=1}^{L}
\frac{1}{N_h Q |S|}
\sum_h
\sum_q
\sum_{k\in S}
a^{(l)}_{h,q,k},
\label{eq:evidence_attention}
\end{equation}
where $N_h$ is the number of attention heads, $Q$ is the number of query
positions in the monitored window, and $|S|$ is the number of evidence
tokens. Thus, $A_t(S)$ measures the average attention assigned by the Host
to the evidence accumulated in a retrieval segment.
We denote the scores of the previous and current retrieval segments as
$A_t^{\mathrm{prev}}=A_t(S_{t-1})$ and
$A_t^{\mathrm{cur}}=A_t(S_t)$, respectively. Their difference,
$\Delta A_t=A_t^{\mathrm{cur}}-A_t^{\mathrm{prev}}$, indicates whether the
newly accumulated evidence receives more or less attention than the evidence
from the preceding retrieval segment.

\paragraph{Attention convergence.}
We use attention entropy to measure whether the Host has reached a more
focused and stable research state. For query position $q$, head $h$, and
layer $l$, we compute the normalized attention entropy
\begin{equation}
E^{(l)}_{h,q}
=
\frac{
-\sum_{k=1}^{K}
\widetilde a^{(l)}_{h,q,k}
\log \widetilde a^{(l)}_{h,q,k}
}{
\max(\log K,1)
},
\qquad
\widetilde a=\max(a,10^{-12}),
\label{eq:attention_entropy}
\end{equation}
where lower entropy indicates that attention is concentrated on a smaller
set of tokens.
We aggregate the entropy across the monitored layers, heads, and query
positions as
\begin{equation}
H_t
=
\sum_{l=1}^{L}\sum_h\sum_q E^{(l)}_{h,q},
\qquad
D_t
=
\frac{H_t-H_1}
{\max(|H_1|,10^{-12})},
\label{eq:entropy_delta}
\end{equation}
where $H_1$ is the entropy at the first retrieval checkpoint. Thus,
$D_t$ measures how much the attention distribution has become more
concentrated relative to the initial research state. We treat the state
as converged when
$
D_t<\tau_H
$,
with $\tau_H=-0.10$.

At each retrieval checkpoint, rerouting can be triggered by either the
inference-time controller or the learned 0.6B plug-in. The controller
triggers \textsc{Reroute} when the current provisional answer has low
confidence, the attention state has not converged, and the newly retrieved
evidence receives less attention than the preceding evidence. If the
controller instead returns a low-confidence \textsc{Continue}, we consult
the plug-in and promote the decision to \textsc{Reroute} when its rerouting
probability exceeds a threshold.The final control decision combines the state-guided controller and the
learned plug-in trigger:
\begin{equation}
\begin{aligned}
g_t^{\mathrm{stop}}
&=
\mathbf{1}\!\left[
C_t>\lambda
\right],\\
g_t^{\mathrm{verify}}
&=
\mathbf{1}\!\left[
C_t\leq\lambda
\land D_t\leq\tau_H
\right],\\
g_t^{\mathrm{ctrl}}
&=
\mathbf{1}\!\left[
C_t\leq\lambda
\land D_t>\tau_H
\land \Delta A_t<0
\right],\\
g_t^{\mathrm{plug}}
&=
\mathbf{1}\!\left[
d_t=\textsc{Continue}
\land C_t\leq\lambda
\land p_P(\textsc{Reroute}\mid s_t)\geq\tau_P
\land G_t=1
\right],\\
g_t^{\mathrm{reroute}}
&=
g_t^{\mathrm{ctrl}}
\lor
g_t^{\mathrm{plug}},\\
d_t^{\mathrm{final}}
&=
\begin{cases}
\textsc{Stop}, & g_t^{\mathrm{stop}}=1,\\
\textsc{Verify}, & g_t^{\mathrm{verify}}=1,\\
\textsc{Reroute}, & g_t^{\mathrm{reroute}}=1,\\
\textsc{Continue}, & \text{otherwise}.
\end{cases}
\end{aligned}
\label{eq:guarded_control}
\end{equation}

Here, $\Delta A_t=A_t^{\mathrm{cur}}-A_t^{\mathrm{prev}}$,
 and $G_t$ indicates whether
the minimum-round and cooldown constraints are satisfied.
When $g_t=1$, the plug-in correction head is activated to redirect the
subsequent retrieval.

\section{Rerouting Reward Details}
\label{app:reward}

We use a trajectory-level reward that combines final task success with the quality of the rerouting intervention. For trajectory $\tau_i$, the reward is
\begin{equation}
R_i
=
R_i^{\mathrm{outcome}}
+
\lambda Q_i
-
\gamma P_i,
\label{eq:app_reward}
\end{equation}
where $R_i^{\mathrm{outcome}}$ measures final-answer correctness, $Q_i$ measures rerouting-hint quality, and $P_i$ penalizes invalid rerouting outputs.

\paragraph{Outcome reward.}
The outcome reward is determined by the final answer of the complete Deep Research trajectory:
\begin{equation}
R_i^{\mathrm{outcome}}
=
\begin{cases}
1, & \text{if the final answer is judged correct},\\
0, & \text{otherwise}.
\end{cases}
\end{equation}

\paragraph{Rerouting-hint quality.}
Each rerouting hint consists of three fields:
\textsc{Entity}, \textsc{Relation}, and \textsc{Disambiguator}.
We evaluate the generated hint according to seven criteria:

\begin{enumerate}
    \item \textbf{Format validity}: the output follows the required three-field format;
    \item \textbf{English usage}: the generated fields are written in English;
    \item \textbf{No self-talk}: the output does not contain reasoning traces, instructions, or meta-language;
    \item \textbf{Entity anchoring}: the \textsc{Entity} field is grounded in the question or the observed research history;
    \item \textbf{Relation validity}: the \textsc{Relation} field provides a meaningful relation for redirecting retrieval;
    \item \textbf{Disambiguation validity}: the \textsc{Disambiguator} provides additional information that narrows the search direction;
    \item \textbf{Field distinctness}: the three fields provide non-redundant information.
\end{enumerate}

Let $c_{i,k}\in\{0,1\}$ indicate whether the $k$-th criterion is satisfied. We compute
\begin{equation}
S_i
=
\sum_{k=1}^{7}
w_k c_{i,k},
\qquad
\sum_{k=1}^{7} w_k = 1,
\end{equation}
where $w_k$ denotes the fixed importance weight of criterion $k$. We use uniform weights \(w_k = 1/7\) for all seven criteria.

The rerouting-quality reward is
\begin{equation}
Q_i
=
\begin{cases}
1,
& \text{if all criteria are satisfied},\\
-1+0.5S_i,
& \text{if the generated hint violates any criterion},\\
0,
& \text{if no rerouting hint is generated}.
\end{cases}
\label{eq:app_hint_quality}
\end{equation}

Thus, valid hints receive a positive reward, while invalid hints receive a graded negative reward according to the satisfied quality criteria.

\paragraph{Invalid-hint penalty.}
We additionally define
\begin{equation}
P_i
=
\begin{cases}
1, & \text{if a generated rerouting hint is invalid},\\
0, & \text{otherwise}.
\end{cases}
\end{equation}
No invalid-hint penalty is applied when no rerouting hint is generated.

For the configuration used in our main experiments, we set
$ \lambda=2.0,
\gamma=2.0.$

When no rerouting hint is generated, $Q_i=P_i=0$, and the reward reduces to  $ R_i=R_i^{\mathrm{outcome}}$

\section{implementation details}
\label{im_details}

For action prediction, we fine-tuned Qwen3-0.6B-Base using LoRA-based supervised fine-tuning (SFT) with two action labels: \textsc{Continue},   and \textsc{Reroute}. We set the maximum sequence length to 8,192 tokens and trained for 3 epochs using AdamW with a learning rate of \(2\times10^{-4}\), a cosine learning-rate schedule, and a warmup ratio of 0.03. For LoRA, we used a rank of 16, a scaling factor of 32, and a dropout rate of 0.05. Training was conducted on 4 GPUs with a per-device batch size of 1 and 4 gradient accumulation steps, resulting in an effective batch size of 16. To alleviate class imbalance among the action labels, we applied class-aware sampling with an exponent of 0.5.

We trained a Qwen3-0.6B control-and-correction plugin while keeping the Deep Research Host frozen. The plugin was initialized from a two-class SFT controller and adapted using LoRA with rank 16, scaling 32, and dropout 0.05. We used GRPO with five trajectories per question, a training batch size of 8, and a clipping ratio of 0.2. The correction head was optimized using AdamW with a learning rate of \(10^{-4}\), zero weight decay, and a maximum gradient norm of 1.0. The SFT-initialized controller parameters were kept fixed during GRPO training. No KL or entropy regularization was applied. The structured route hint used token budgets of 16, 12, and 12 for the ENTITY, RELATION, and DISAMBIGUATOR fields, with temperature 1.0, top-\(p=0.95\), and correction Top-\(K=128\). We trained on 2,870 questions, used 359 questions for testing.

We used Qwen/Qwen3-Embedding-8B as the fixed dense retriever. The maximum research horizon was set to 80 rounds, with an input budget of 23,000 tokens and a maximum model context length of 32,768 tokens. For state-based action decisions, we used the host model itself to extract attention-based signals and confidence scores. We set the confidence threshold to 0.95 and the entropy-delta threshold to \(-0.10\), using a 32-token decoding attention window and monitoring the final four layers. It generated an ENTITY ; RELATION ; DISAMBIGUATOR route hint with token budgets of 16, 12, and 12, respectively, using correction Top-\(K=128\). The frozen Host rewrote this hint into a browser.search query using temperature 0.7, a maximum of 32 generated tokens, and a 16-word query limit.

% \begin{table}[t]
% \centering
% \caption{Hyperparameters for supervised plug-in initialization.}
% \label{tab:sft_hyperparameters}
% \small
% \setlength{\tabcolsep}{6pt}
% \begin{tabular}{ll}
% \toprule
% \textbf{Hyperparameter} & \textbf{Value} \\
% \midrule
% Base model & Qwen3-0.6B-Base \\
% Training method & LoRA SFT \\
% Action labels & Continue, Stop, Reroute \\
% Maximum sequence length & 8,192 \\
% Training epochs & 3 \\
% Learning rate & \(2\times10^{-4}\) \\
% Optimizer & AdamW \\
% Learning-rate schedule & Cosine \\
% Warmup ratio & 0.03 \\
% LoRA rank & 16 \\
% LoRA scaling factor & 32 \\
% LoRA dropout & 0.05 \\
% Per-device batch size & 1 \\
% Gradient accumulation steps & 4 \\
% Number of GPUs & 4 \\
% Effective batch size & 16 \\
% Class-sampling exponent & 0.5 \\
% \bottomrule
% \end{tabular}
% \end{table}

\section{Turn-Level Supervision and Data Construction}
\label{trajactory_label}

\paragraph{Turn-Level Supervision.}
We collect multi-turn deep research trajectories from a pretrained deep-research agent and retain trajectories with correct final answers.  Upon completion of each deep-research turn \(t\), we extract a state \(s_t=(q,h_{<t},a_t)\), where \(q\) is the original question, \(h_{<t}\) is the  deep research history preceding the current turn, and \(a_t\) is the current action together with its result. We extract \(a_t\) from the final tool call of turn \(t\), such as \texttt{search}, \texttt{open}, or \texttt{find}.
%%%%
A capable language model, such as GPT-4.1, evaluates the current action \(a_t\) in the context of the question \(q\) and  the accumulated deep-research history \(h_{<t}\), using the gold answer as the annotation target. It first determines whether the cumulative evidence after \(a_t\) supports all essential parts of the gold answer. If so, it sets \(z_t^{\mathrm{stop}}=1\). Otherwise, it sets \(z_t^{\mathrm{stop}}=0\) and evaluates whether the search direction represented by \(a_t\) remains productive given the preceding history. It sets \(z_t^{\mathrm{reroute}}=1\) if the direction is unproductive and requires a substantial change, and \(z_t^{\mathrm{reroute}}=0\) if the direction remains productive and should be continued. A single failed tool call is insufficient to trigger rerouting. When \(z_t^{\mathrm{stop}}=1\), we set \(z_t^{\mathrm{reroute}}=0\).
%%%%%
The resulting label is:
\begin{equation}
\quad
y_t=\textsc{Reroute}\ \text{if }z_t^{\mathrm{stop}}=0\land z_t^{\mathrm{reroute}}=1;\quad
y_t=\textsc{Continue}\ \mathrm{o.w.}.
\label{eq:three_way_labels}
\end{equation}
Appendix~\ref{app:prompt} reports the trajectory-generation model and annotation prompt. Using these annotations, we initialize the plug-in from Qwen3-0.6B-Base and fine-tune it on the turn-level action labels. Given the current state \(s_t=(q,h_{<t},a_t)\), the plug-in generates exactly one label from \(\{\textsc{Continue}, \textsc{Reroute}\}\).
% We mask the input tokens and compute the language-modeling loss only on the target label:
% \begin{equation}
% \mathcal{L}*{\mathrm{SFT}}=-\log p*{\theta_P}(y_t\mid s_t).
% \end{equation}
Training details are provided in Appendix~\ref{im_details}.

% \paragraph{Trajectory generation and annotation.}
\paragraph{Training Data Construction.} We use \textbf{Tongyi-DeepResearch-30B-A3B} to generate complete multi-turn deep-research trajectories for \(3{,}229\) questions. To avoid test leakage, trajectories corresponding to the 359 BrowseCore test questions are excluded from controller training.  We then use GPT-4.1 to evaluate the correctness of each final answer and retain \(1{,}575\) correct trajectories, comprising \(32{,}351\) turns ending with actions such as \texttt{search}, \texttt{open}, or \texttt{find}. For each turn \(t\), GPT-4.1 receives the original question, the gold answer, the accumulated deep-research trajectory through the current action result, and the current action type. The gold answer is used only as the annotation target and is not treated as evidence. The judge first determines whether the accumulated evidence is sufficient to answer the question and then, when the evidence is insufficient, determines whether the current search direction remains productive. After annotation-quality filtering, \(1{,}475\) questions have complete turn-level annotations, yielding \(29{,}252\) examples. 
% We further retain \(103\) valid \textsc{Reroute} examples from partially annotated trajectories, producing a final dataset of \(29{,}252\) examples from \(1{,}503\) questions. The dataset contains \(17{,}483\) \textsc{Continue}, \(10{,}903\) \textsc{Stop}, and \(866\) \textsc{Reroute} examples.
We split the data by question ID into \(26{,}351\) training examples and \(2{,}901\) test examples, ensuring that all turns associated with the same question remain in a single partition. The complete annotation prompt is provided at Figure~\ref{fig:turn_level_annotation_prompt}. Turn-level  action annotation with GPT-4.1 cost approximately \$1{,}800.

\section{Prompt}
\label{app:prompt}

\subsection{Deep Research System Prompt}
\label{con:systempromt}

Figure~\ref{fig:literesearcher_system_prompt} presents an example system prompt
used by LiteResearcher-4B in our experiments. The prompt specifies the agent role,
tool-use protocol, reasoning format, and final answer format. It illustrates how
the Deep Research agent is instructed to iteratively reason, interact with browser
tools, and return a final answer after sufficient information has been collected.

\begin{figure*}[t]
\centering

\begin{tcolorbox}[
    enhanced,
    width=0.94\textwidth,
    colback=promptgray,
    colframe=black,
    boxrule=0.9pt,
    arc=2.8mm,
    outer arc=2.8mm,
    left=5mm,
    right=5mm,
    top=3mm,
    bottom=3mm,
    title={\textbf{System Prompt for LiteResearcher}},
    colbacktitle=black,
    coltitle=white,
    fonttitle=\small\bfseries,
    attach boxed title to top left={
        xshift=0mm,
        yshift=-0.5mm
    },
    boxed title style={
        colback=black,
        colframe=black,
        boxrule=0pt,
        arc=2.8mm,
        outer arc=2.8mm,
        left=3mm,
        right=3mm,
        top=1mm,
        bottom=1mm
    }
]

\small
\setlength{\parindent}{0pt}
\setlength{\parskip}{2pt}

You are a \textbf{Deep AI Research Assistant}.

\vspace{1mm}

The question is a complex question that requires
\textbf{deep research} to answer.

The runtime provides browser tools for searching information
and inspecting retrieved webpages.

\vspace{1mm}

You do not need to answer the question immediately.
You should first reason about the research plan or determine
what to search next.

\vspace{1mm}

\textbf{Output Format}

Your output must follow one of the following two formats.

\begin{tcolorbox}[
    colback=codegray,
    colframe=gray!45,
    boxrule=0.4pt,
    arc=1.5mm,
    left=3mm,
    right=3mm,
    top=2mm,
    bottom=2mm
]
\ttfamily\footnotesize
<think>\\
YOUR THINKING PROCESS\\
</think>\\
<answer>\\
YOUR ANSWER AFTER GETTING ENOUGH INFORMATION\\
</answer>
\end{tcolorbox}

or

\begin{tcolorbox}[
    colback=codegray,
    colframe=gray!45,
    boxrule=0.4pt,
    arc=1.5mm,
    left=3mm,
    right=3mm,
    top=2mm,
    bottom=2mm
]
\ttfamily\footnotesize
<think>\\
YOUR THINKING PROCESS\\
</think>\\
<tool\_call>\\
YOUR TOOL CALL WITH CORRECT FORMAT\\
</tool\_call>
\end{tcolorbox}

\vspace{1mm}

The runtime provides the available browser function signatures separately.
Use only the exact function names and argument schemas supplied by the runtime.
Do not invent aliases or rely on any other tool interface.

\vspace{1mm}

For each function call, return a JSON object with the function name and
arguments inside \texttt{<tool\_call></tool\_call>} tags.

\begin{tcolorbox}[
    colback=codegray,
    colframe=gray!45,
    boxrule=0.4pt,
    arc=1.5mm,
    left=3mm,
    right=3mm,
    top=2mm,
    bottom=2mm
]
\ttfamily\footnotesize
<tool\_call>\\
\{\\
\hspace*{4mm}
"name": <function-name>,\\
\hspace*{4mm}
"arguments": <args-json-object>\\
\}\\
</tool\_call>
\end{tcolorbox}

\vspace{1mm}

You should always follow the above formats strictly.

Only output the final answer in words, numbers, or a short phrase inside
the \texttt{<answer></answer>} tag, without explanations or extra information.
For yes-or-no questions, output only \texttt{yes} or \texttt{no}.

\vspace{1mm}

\textbf{Current date:} \textcolor{promptblue}{\{current\_date\}}

\end{tcolorbox}

\caption{
System prompt used by LiteResearcher during Deep Research.
The model iteratively reasons, invokes browser tools, and produces a final
answer after sufficient evidence has been collected.
}
\label{fig:literesearcher_system_prompt}

\end{figure*}

\subsection{ Browser Tools}
\label{con:browser_tools}
\begin{figure*}[t]
\centering

\begin{tcolorbox}[
    enhanced,
    width=0.94\textwidth,
    colback=promptgray,
    colframe=black,
    boxrule=0.9pt,
    arc=2.8mm,
    outer arc=2.8mm,
    left=5mm,
    right=5mm,
    top=3mm,
    bottom=3mm,
    title={\textbf{Browser Tool Specification for LiteResearcher-4B}},
    colbacktitle=black,
    coltitle=white,
    fonttitle=\small\bfseries,
    attach boxed title to top left={
        xshift=0mm,
        yshift=-0.5mm
    },
    boxed title style={
        colback=black,
        colframe=black,
        boxrule=0pt,
        arc=2.8mm,
        outer arc=2.8mm,
        left=3mm,
        right=3mm,
        top=1mm,
        bottom=1mm
    }
]

\small
\setlength{\parindent}{0pt}
\setlength{\parskip}{2pt}

The runtime provides three browser tools for external information retrieval
and webpage inspection.

\vspace{1mm}

\textbf{browser.search}

Searches for information related to a query and returns the top retrieved
results with titles, URLs, and summaries.

\begin{tcolorbox}[
    colback=codegray,
    colframe=gray!45,
    boxrule=0.4pt,
    arc=1.5mm,
    left=3mm,
    right=3mm,
    top=2mm,
    bottom=2mm
]
\ttfamily\footnotesize
\{\\
\hspace*{4mm}"query": <string>,\\
\hspace*{4mm}"topn": <integer, default=10>\\
\}
\end{tcolorbox}

\vspace{1mm}

\textbf{browser.open}

Opens a retrieved link or a fully qualified URL and optionally displays a
specified region of the page.

\begin{tcolorbox}[
    colback=codegray,
    colframe=gray!45,
    boxrule=0.4pt,
    arc=1.5mm,
    left=3mm,
    right=3mm,
    top=2mm,
    bottom=2mm
]
\ttfamily\footnotesize
\{\\
\hspace*{4mm}"id": <integer or string>,\\
\hspace*{4mm}"cursor": <integer>,\\
\hspace*{4mm}"loc": <integer>,\\
\hspace*{4mm}"num\_lines": <integer>,\\
\hspace*{4mm}"view\_source": <boolean>,\\
\hspace*{4mm}"source": <string>\\
\}
\end{tcolorbox}

\vspace{1mm}

\textbf{browser.find}

Finds exact matches of a text pattern within the current or specified webpage.

\begin{tcolorbox}[
    colback=codegray,
    colframe=gray!45,
    boxrule=0.4pt,
    arc=1.5mm,
    left=3mm,
    right=3mm,
    top=2mm,
    bottom=2mm
]
\ttfamily\footnotesize
\{\\
\hspace*{4mm}"pattern": <string>,\\
\hspace*{4mm}"cursor": <integer>\\
\}
\end{tcolorbox}

\end{tcolorbox}

\caption{
Browser tool specifications provided to LiteResearcher-4B.
The agent uses \texttt{browser.search} for retrieval,
\texttt{browser.open} for webpage inspection, and
\texttt{browser.find} for locating specific information within a page.
}
\label{fig:literesearcher_tool_specification}

\end{figure*}
Figure~\ref{fig:literesearcher_tool_specification} presents an example of
the browser tool interfaces provided to LiteResearcher-4B in our experiments.
The runtime exposes tools for web search, webpage inspection, and in-page
information lookup, enabling the agent to iteratively gather evidence during
long-horizon research.

\subsection{Turn-Level Annotation for Plug-in Training}
\label{con:turn-level prompt}

\begin{figure*}[t]
\centering

\begin{tcolorbox}[
    enhanced,
    width=0.96\textwidth,
    colback=promptgray,
    colframe=black,
    boxrule=0.9pt,
    arc=2.8mm,
    outer arc=2.8mm,
    left=5mm,
    right=5mm,
    top=3mm,
    bottom=3mm,
    title={\textbf{Prompt Template for Turn-Level Action Annotation}},
    colbacktitle=black,
    coltitle=white,
    fonttitle=\small\bfseries,
    attach boxed title to top left={
        xshift=0mm,
        yshift=-0.5mm
    },
    boxed title style={
        colback=black,
        colframe=black,
        boxrule=0pt,
        arc=2.8mm,
        outer arc=2.8mm,
        left=3mm,
        right=3mm,
        top=1mm,
        bottom=1mm
    }
]

\small
\setlength{\parindent}{0pt}
\setlength{\parskip}{2pt}

You are a strict evidence-sufficiency judge.

\vspace{1mm}

Evaluate
\textcolor{promptblue}{\{case\_count\}}
independent cases. For each case, use only the evidence in that case's
cumulative history. Never transfer information between cases and do not use
outside knowledge. A ground-truth answer is only the evaluation target; it is
not itself evidence.

\vspace{1mm}

\textbf{Judgment A: Evidence sufficiency.}

\begin{itemize}[leftmargin=4mm,itemsep=1pt,topsep=1pt]
    \item \textbf{STOP = 1:} The evidence collected so far contains
    sufficient information to support all essential claims in the
    ground-truth answer. The question can therefore be answered correctly
    without further retrieval.

    \item \textbf{STOP = 0:} At least one essential part of the
    ground-truth answer is unsupported, unclear, ambiguous, or requires
    additional retrieval.
\end{itemize}

An unsupported guess or claim made by the assistant does not count as evidence.

\vspace{1mm}

\textbf{Judgment B: Search-direction assessment.}

If \textbf{STOP = 0}, evaluate the current search direction:

\begin{itemize}[leftmargin=4mm,itemsep=1pt,topsep=1pt]
    \item \textbf{REROUTE = 1:} The current search direction is no longer
    productive. The agent should switch to a substantially different
    direction to obtain the missing evidence.

    \item \textbf{REROUTE = 0:} The current search direction remains
    productive, and further retrieval along the same direction may provide
    the missing evidence.
\end{itemize}

A single failed tool call is not sufficient to trigger a reroute when the
broader search direction remains promising.

\vspace{1mm}

\textbf{Label Definitions}

\begin{itemize}[leftmargin=4mm,itemsep=1pt,topsep=1pt]
    \item \textbf{REROUTE:}
    \(\mathrm{STOP}=0\) and \(\mathrm{REROUTE}=1\).

    \item \textbf{CONTINUE:}
    \(\mathrm{STOP}=0\) and \(\mathrm{REROUTE}=0\). The available evidence
    is insufficient, but the current search direction remains productive
    and should be continued.
\end{itemize}

If \textbf{STOP = 1}, \textbf{REROUTE} must be \(0\), because no further
retrieval is needed.

\vspace{1mm}

\textbf{Output Format}

Return only a JSON array in \texttt{case\_id} order, with no Markdown or
explanation:

\begin{tcolorbox}[
    colback=codegray,
    colframe=gray!40,
    boxrule=0.4pt,
    arc=1.5mm,
    left=3mm,
    right=3mm,
    top=2mm,
    bottom=2mm
]
\ttfamily\footnotesize
\detokenize{[{"case_id":0,"stop":0,"reroute":0}]}
\end{tcolorbox}

The array must contain exactly
\textcolor{promptblue}{\{case\_count\}}
objects, one per case.

\vspace{1mm}

\textbf{Input Cases}

\textcolor{promptblue}{\{cases\}}

\end{tcolorbox}

\caption{Prompt template used to annotate each deep-research turn with
\textsc{Continue},  or \textsc{Reroute}.}
\label{fig:turn_level_annotation_prompt}

\end{figure*}

Figure~\ref{fig:turn_level_annotation_prompt} shows the prompt template used to annotate each deep-research turn with one of two labels: \textsc{Continue}   or \textsc{Reroute}, providing turn-level supervision for training the plug-in model.

% =========================
% Preamble
% =========================

\definecolor{promptblue}{RGB}{45,95,180}
\definecolor{promptpurple}{RGB}{125,70,170}
\definecolor{promptgray}{RGB}{247,247,247}
\definecolor{codegray}{RGB}{242,242,242}

% =========================
% Prompt Figure
% =========================
\begin{figure*}[t]
\centering

\begin{tcolorbox}[
    enhanced,
    width=0.96\textwidth,
    colback=promptgray,
    colframe=black,
    boxrule=0.9pt,
    arc=2.8mm,
    outer arc=2.8mm,
    left=5mm,
    right=5mm,
    top=3mm,
    bottom=3mm,
    title={\textbf{Prompt Template for Evidence Document Verification}},
    colbacktitle=black,
    coltitle=white,
    fonttitle=\small\bfseries,
    attach boxed title to top left={
        xshift=0mm,
        yshift=-0.5mm
    },
    boxed title style={
        colback=black,
        colframe=black,
        boxrule=0pt,
        arc=2.8mm,
        outer arc=2.8mm,
        left=3mm,
        right=3mm,
        top=1mm,
        bottom=1mm,
    }
]

\small
\setlength{\parindent}{0pt}
\setlength{\parskip}{2pt}

You are labeling candidate documents for a multi-hop question answering dataset.

\vspace{1mm}

\textbf{Original question:}
\textcolor{promptblue}{\{original\_question\}}

\textbf{Ground-truth answer:}
\textcolor{promptblue}{\{original\_answer\}}

\textbf{Target clue:}
\textcolor{promptblue}{\{target\_subquestion\}}

\textbf{Candidate documents:}
\textcolor{promptblue}{\{documents\_json\}}

\vspace{1mm}

For each candidate document, make two judgments.

\vspace{1mm}

\textbf{Judgment A: Clue-level evidence assessment.}

Classify the document with respect to the target clue:

\begin{itemize}[leftmargin=4mm,itemsep=1pt,topsep=1pt]
    \item \textbf{SUPPORTS:} The document provides explicit or clearly inferable evidence that answers the target clue.
    
    \item \textbf{RELATED\_BUT\_INSUFFICIENT:} The document is relevant to the topic, entity, or clue, but does not provide sufficient evidence to answer the target clue.
    
    \item \textbf{IRRELEVANT:} The document does not meaningfully help answer the target clue.
\end{itemize}

\vspace{1mm}

\textbf{Judgment B: Original-question evidence assessment.}

Determine whether this single document contains sufficient evidence to answer the original question with the given ground-truth answer.

Set
\textcolor{promptpurple}{\texttt{can\_answer\_original=true}}
only if the document contains sufficient explicit or logically inferable evidence to derive the final answer.

A gold document is not defined solely by exact string matching. The answer may be implicit, paraphrased, or expressed in a different surface form. If the document only contains the answer string without establishing its relation to the original question, set
\textcolor{promptpurple}{\texttt{can\_answer\_original=false}}.

\vspace{1mm}

\textbf{Important Rules}

\begin{itemize}[leftmargin=4mm,itemsep=1pt,topsep=1pt]
    \item Do not use outside knowledge.
    \item Do not treat shared keywords, similar topics, or partial entity matches as sufficient evidence.
    \item If the label is \textcolor{promptpurple}{\texttt{SUPPORTS}}, identify the shortest supporting evidence span.
    \item If \textcolor{promptpurple}{\texttt{can\_answer\_original=true}}, identify the shortest span supporting the final answer.
    \item Preserve the original \texttt{document\_id} for every candidate document.
\end{itemize}

\vspace{1mm}

\textbf{Output Format}

\begin{tcolorbox}[
    colback=codegray,
    colframe=gray!40,
    boxrule=0.4pt,
    arc=1.5mm,
    left=3mm,
    right=3mm,
    top=2mm,
    bottom=2mm
]
\ttfamily\footnotesize
\{
\ "results": [
\{
\ "document\_id": "...",\\
\ "label": "SUPPORTS" | "RELATED\_BUT\_INSUFFICIENT" | "IRRELEVANT",\\
\ "evidence\_span": "...",\\
\ "rationale": "...",\\
\ "can\_answer\_original": true | false,\\
\ "answer\_evidence\_span": "...",\\
\ "answer\_rationale": "..."
\}
]
\}
\end{tcolorbox}

\end{tcolorbox}

\caption{Prompt template for automated evidence-document verification.}
\label{fig:evidence_verification_prompt}

\end{figure*}

\subsection{Evidence Document Verification}
\label{con:evidence document prompt}
Figure~\ref{fig:evidence_verification_prompt} presents the prompt template
used to verify candidate evidence documents. For each document, the annotator
first determines whether it \textsc{Supports}, is
\textsc{Related-but-Insufficient} to, or is \textsc{Irrelevant} to the
target clue. It then determines whether the document contains sufficient
evidence to answer the original question. For positive cases, the shortest
supporting evidence span is also extracted.

\subsection{Prompt Template for Clue Decomposition}
\label{clue promt}
Figure~\ref{fig:clue_decomposition_prompt} shows the prompt used for clue decomposition.
% =========================
% Preamble
% =========================
% \usepackage[most]{tcolorbox}
% \usepackage{xcolor}
% \usepackage{enumitem}

\definecolor{promptblue}{RGB}{45,95,180}
\definecolor{promptpurple}{RGB}{125,70,170}
\definecolor{promptgray}{RGB}{248,248,248}
\definecolor{codegray}{RGB}{243,243,243}

% =========================
% Clue Decomposition Prompt
% =========================
\begin{figure*}[t]
\centering

\begin{tcolorbox}[
    enhanced,
    width=0.94\textwidth,
    colback=promptgray,
    colframe=black,
    boxrule=0.9pt,
    arc=2.8mm,
    outer arc=2.8mm,
    left=5mm,
    right=5mm,
    top=3mm,
    bottom=3mm,
    title={\textbf{Prompt Template for Clue Decomposition}},
    colbacktitle=black,
    coltitle=white,
    fonttitle=\small\bfseries,
    attach boxed title to top left={
        xshift=0mm,
        yshift=-0.5mm
    },
    boxed title style={
        colback=black,
        colframe=black,
        boxrule=0pt,
        arc=2.8mm,
        outer arc=2.8mm,
        left=3mm,
        right=3mm,
        top=1mm,
        bottom=1mm
    }
]

\small
\setlength{\parindent}{0pt}
\setlength{\parskip}{2pt}

You are decomposing a complex question into
\textbf{atomic factual clues} for evidence retrieval.

\vspace{1mm}

You are given an original question and its reference answer.
Your task is to extract the individual factual conditions in the
question that the reference answer must satisfy.

\vspace{1mm}

\textbf{Original question:}
\textcolor{promptblue}{\{question\}}

\textbf{Reference answer:}
\textcolor{promptblue}{\{answer\}}

\vspace{1mm}

\textbf{Rules}

\begin{itemize}[
    leftmargin=4mm,
    itemsep=1.2pt,
    topsep=1pt,
    parsep=0pt
]
    \item Generate the \textbf{smallest complete set of clues}
    needed to cover the conditions in the original question.

    \item Each clue must express only
    \textbf{one factual condition or relation}.

    \item Write each clue as a
    \textbf{self-contained declarative statement}.

    \item Preserve important constraints such as
    \textbf{dates, locations, author roles, affiliations,
    quantities, titles, and relationships}.

    \item Use only information stated in the
    original question or reference answer.

    \item Do not invent facts, aliases, entities,
    or supporting evidence.

    \item Do not generate retrieval queries,
    explanations, URLs, or evidence spans.

    \item Return no text outside the JSON object.
\end{itemize}

\vspace{1mm}

\textbf{Output Format}

\begin{tcolorbox}[
    colback=codegray,
    colframe=gray!45,
    boxrule=0.4pt,
    arc=1.5mm,
    left=3mm,
    right=3mm,
    top=2mm,
    bottom=2mm
]
\ttfamily\footnotesize
\{
\ "clues": [\\
\hspace*{4mm}
\{\\
\hspace*{8mm}
"clue\_id": "C1",\\
\hspace*{8mm}
"clue": "A short, self-contained factual condition."\\
\hspace*{4mm}
\}\\
\ ]\\
\}
\end{tcolorbox}

\end{tcolorbox}

\caption{Prompt template for atomic clue decomposition. Given an original
question and its reference answer, the model decomposes the information
need into a minimal set of self-contained factual clues for subsequent
evidence retrieval and verification.}
\label{fig:clue_decomposition_prompt}

\end{figure*}

%% file: 7_dataset_build.tex
% !TeX root = iclr2027_conference.tex
\section{BrowseCore}
\label{con:ourdataset}
\begin{figure*}[t]
        \centering
         % \vspace{-10mm}
        \includegraphics[width=1\columnwidth]{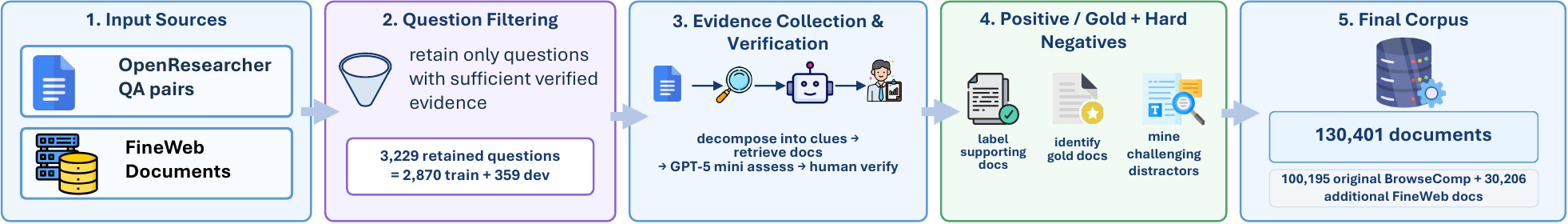}
        % \vspace{-1mm}
	\caption{Overview of our benchmark and corpus construction pipeline. We start from OpenResearcher question--answer pairs, collect candidate evidence through clue-level retrieval, and combine GPT-5 mini assessment with human verification to filter unverifiable questions. Verified supporting documents are labeled as positive and gold evidence, while high-ranking non-supporting documents are retained as hard negatives. The final corpus augments the BrowseComp-Plus retrieval corpus with these verified evidence documents and mined distractors, resulting in a compact and efficient benchmark for controlled Deep Research experiments.}
	\label{fig:dataflow}
  \vspace{-10pt}
\end{figure*}

To support controlled and reproducible retrieval experiments for Deep Research agents, we construct \textbf{BrowseCore}, a compact benchmark with manually verified evidence. BrowseCore is built from OpenResearcher question--answer pairs while substantially reducing the scale of the retrieval corpus and strengthening evidence quality through explicit verification. For each retained question, we identify supporting and gold documents, mine challenging hard negatives, and ensure that the verified evidence is sufficient to derive the reference answer. The resulting benchmark contains dedicated training and development splits and is designed to enable efficient analysis of retrieval behavior and long-horizon Deep Research trajectories.

\subsection{Background and Motivation}
\label{con:motivation}
Existing Deep Research benchmarks often rely on large-scale retrieval corpora, making controlled retrieval analysis computationally expensive. Among them, OpenResearcher provides a representative open benchmark with large-scale web documents and question--answer pairs, which we use as the starting point for constructing BrowseCore.

OpenResearcher provides approximately 10K gold documents for 6K questions in an open benchmark specifically designed to assess whether deep-research AI agents can interactively and creatively navigate the web to find complex, hard-to-find information. To approximate real-world web coverage and reflect realistic search complexity, it collects 15 million documents (approximately 10 trillion tokens) from FineWeb~\citep{penedo2024fineweb}. For efficient large-scale dense retrieval, each document is embedded with Qwen3-Embedding-8B~\citep{zhang2025qwen3} and indexed using FAISS; the resulting corpus.pkl file is 113.65 GB.

Although this setup is effective and widely used for end-to-end evaluation of integrated deep-research systems, it introduces several limitations for controlled retrieval research. During gold-document construction, OpenResearcher concatenates the question and reference answer to form the search query in order to improve recall, and then retrieves web content through the Serper API. However, the retrieved candidate documents are not exhaustively verified through human evaluation. As a result, the gold set may be incomplete or contain weakly validated evidence documents. In addition, the extremely large corpus makes experiments computationally expensive and less suitable for fine-grained analysis of retrieval behavior and research insights.
%%%%
Moreover, the retrieved documents are not systematically verified for whether they actually contain sufficient evidence to support the reference answer. Therefore, in this work, we aim to construct a high-quality and easy-to-use benchmark with a compact, carefully verified retrieval corpus.

\subsection{Benchmark and Corpus Construction}
\label{con:corpus contruction}
As shown in Figure~\ref{fig:dataflow}, we construct BrowseCore from question--answer pairs in OpenResearcher through a four-stage pipeline consisting of question filtering, evidence collection and verification, hard-negative mining, and final corpus assembly. Rather than directly treating automatically retrieved documents as gold evidence, we explicitly verify whether the retrieved documents support the decomposed clues and whether the resulting evidence is sufficient to derive the reference answer.

Firstly, we adopt a clue-level verification protocol. For each question, the question is first decomposed into self-contained clues, and candidate documents are retrieved for each clue using Qwen3-Embedding-8B. Preliminary relevance judgments and evidence spans are then generated automatically with GPT-5 mini, followed by final human verification. Annotators verify that each retained clue is supported by explicit textual evidence and that the complete set of verified evidence is sufficient for a human to derive the reference answer.
If sufficient evidence cannot be identified, the clues are revised and additional retrieval rounds are performed, for up to five rounds in total. Questions for which adequate supporting evidence still cannot be established after five rounds are removed from the benchmark. This process yields a compact and carefully verified retrieval corpus that is easier to index and retrieve from, while remaining suitable for controlled analysis of retrieval quality and long-horizon Deep Research trajectories.

\subsubsection{Question Source and Filtering}
\label{con:question source}
We begin with question--answer pairs from the training set of OpenResearcher. For each question, we first attempt to collect sufficient supporting evidence through automated retrieval and human verification. Questions are retained only when the retrieved evidence can be verified to support the key clues and is sufficient for a human to derive the reference answer. If adequate evidence cannot be identified after iterative retrieval and manual review, the question is removed from the benchmark. This filtering process results in 3,588 retained questions, including 2,870 training, 359 validation questions and 359 test questions.

\subsubsection{Candidate Document Sources}
\label{con:document source}
To collect evidence, we search over the document corpus associated with OpenResearcher, which contains approximately 15 million FineWeb documents, together with the original retrieval corpus used for the BrowseComp questions. These documents serve as the candidate pool from which supporting and distracting documents are identified.

\subsubsection{Evidence Collection and Verification}
\label{con:Evidence source}

Following BrowseComp-Plus, we first decompose each question into a set of self-contained clues, using the prompt shown in Figure~\ref{fig:clue_decomposition_prompt}, that jointly lead to the reference answer. For each clue, we use Qwen3-Embedding-8B to retrieve the top five candidate documents from the corpus.
%%%
We then use GPT-5 mini to automatically assess whether each retrieved document supports the corresponding clue, using the prompt shown in Figure~\ref{fig:evidence_verification_prompt}. The model distinguishes supporting evidence from documents that are merely related or irrelevant, and extracts the shortest supporting evidence span whenever applicable.
%%%%
Human annotators subsequently review the automatically generated relevance labels and evidence spans. For each question, annotators are required to verify that every clue is justified by explicit textual evidence and that the complete set of verified clues is sufficient for a human to derive the reference answer.

Specifically, annotators are asked to:

\begin{itemize}
\item confirm that each clue is supported by the retrieved documents and label the exact text spans that justify the clue; and
\item determine whether the combination of verified clues and supporting evidence is sufficient to answer the entire question correctly.
\end{itemize}

If the initial retrieval results do not provide sufficient evidence, annotators revise the decomposed clues and perform additional rounds of retrieval. This process is repeated for up to five rounds. If sufficient supporting evidence still cannot be identified after five retrieval rounds, the question is considered unverifiable and removed from the benchmark.

\subsubsection{Positive and Gold Document Identification}
\label{con:pos document}
Based on the evidence verification results, documents that provide sufficient support for at least one decomposed clue are labeled as positive evidence documents. Among these positives, annotators further identify documents that contain sufficient information to directly derive or verify the final reference answer; we designate these as gold documents.

Following BrowseComp-Plus, gold documents are not restricted to documents containing the reference answer as an exact substring. A document may still be considered gold when the answer is expressed implicitly, paraphrased, or in a different surface form, as long as the document provides sufficient evidence to derive the final answer. This distinction enables us to separately evaluate retrieval of useful intermediate evidence and retrieval of documents that directly support the final answer.

\subsubsection{Hard-Negative Mining}
\label{con:hard negative}
To maintain a manageable corpus while preserving retrieval difficulty, we further mine hard-negative documents as realistic distractors. Following BrowseComp-Plus, we use GPT-5 mini to decompose each original question into simpler, self-contained sub-queries. For each sub-query, Qwen3-Embedding-8B retrieves the top 10 candidate documents from the corpus.

Because these candidates are retrieved using individual clues rather than the complete set of constraints in the original question, they are often semantically relevant to part of the information need but insufficient for answering the full question. We therefore remove all documents previously verified as positive or gold evidence and retain the remaining high-ranking, non-supporting documents as hard negatives. These documents provide challenging distractors that are topically related to the query while lacking sufficient evidence to derive the reference answer.

% \subsubsection{Positive and Gold Document Identification}

% Documents verified to support one or more clues are labeled as positive evidence documents. Among these documents, annotators additionally identify those that explicitly contain or directly support the final reference answer. We designate these documents as gold documents.
% %%%%%
% This distinction allows us to separately evaluate whether a retrieval system identifies generally useful intermediate evidence and whether it retrieves documents that directly support the final answer.

% \subsubsection{Hard-Negative Mining}

% To preserve retrieval difficulty, we further mine hard-negative documents for each question. We use the decomposed clues and derived sub-queries to search the corpus for documents that are semantically similar to the information need but do not provide sufficient evidence for answering the question.
% %%%%%%
% Candidate negatives are filtered to exclude verified positive and gold documents. The remaining high-ranking but non-supporting documents are retained as hard negatives, providing challenging distractors for retrieval models.

\subsubsection{Final Corpus Assembly}
\label{con:final}
For each question, we augment the original BrowseComp-Plus retrieval corpus with our newly verified positive and gold evidence documents, together with the mined hard-negative documents. This construction preserves the retrieval difficulty of BrowseComp-Plus while ensuring that each retained question has a manually verified evidence path to the reference answer.
The resulting benchmark therefore provides both reliable evidence supervision and a compact retrieval setting suitable for controlled analysis of retrieval quality, rerouting behavior, and long-horizon Deep Research trajectories. 
The final retrieval corpus contains 130,401 documents in total, including 100,195 documents from the original BrowseComp corpus and 30,206 additional documents collected and verified from OpenResearcher. The corpus occupies approximately 1.8 GB on disk, providing a substantially more compact retrieval environment for controlled Deep Research experiments. Note: We use GPT-5 mini for clue decomposition and preliminary evidence assessment, followed by final human verification conducted by four Ph.D. student annotators.

%% file: 10_addtinial_relatedwork.tex
\section{Additional Related Work}
\label{sec:additional-related-work}
\paragraph{Retrieval-Augmented Reasoning.}
Retrieval-augmented generation (RAG) combines parametric language models with
external non-parametric memory to improve knowledge-intensive generation
\citep{lewis2020retrieval}. Beyond conventional document retrieval,
retrieval augmentation has been extended to structured knowledge, where
relevant entities, attributes, and relational structures are retrieved from
knowledge graphs to support state tracking and link prediction
\citep{li2023understand,li2024zero}. Subsequent work has increasingly coupled
retrieval with the evolving reasoning process. IRCoT
\citep{trivedi2023interleaving} interleaves retrieval with chain-of-thought
reasoning for multi-step question answering, while RT \citep{li2024rt}
combines retrieval-based demonstration selection with chain-of-thought
reasoning for few-shot biomedical named entity recognition. ReAct
\citep{yao2023react} further establishes the effectiveness of interleaving
reasoning with information-seeking actions, and DeepRAG
\citep{guan2026deeprag} formulates retrieval-augmented reasoning as a
sequential decision process that dynamically determines whether retrieval is
needed during reasoning.
%%%
Adaptive retrieval has also been explored through model-generated control
signals and dynamically constructed reasoning structures. Self-RAG
\citep{asai2024self} learns reflection tokens to determine when retrieval is
needed and to assess retrieved evidence, while LogicRAG
\citep{kabir2025logic} constructs query-specific reasoning structures to
guide multi-step retrieval. SPARKLE
\citep{fang2026sparkle} introduces a lightweight plug-and-play retrieval
controller that adaptively regulates retrieval while remaining decoupled from
the underlying retriever and language model. In domain-specific settings, BiomedRAG
\citep{li2025biomedrag} augments language models with retrieved biomedical
evidence, and GLEN-Bench \citep{huang2026glen} studies graph-language reasoning
for nutritional health, including retrieval over structured graph knowledge.
CancerLLM \citep{li2026cancerllm} further employs retrieval-based example
selection for cancer phenotyping and diagnosis generation. Complementing these
methodological developments, retrieval-augmented LLMs have also been
systematically evaluated for robustness and self-awareness under noisy and
heterogeneous evidence \citep{li2025benchmarking}.
\paragraph{Adaptive Control for Retrieval and Reasoning.}
As retrieval becomes increasingly iterative, recent work has shifted from
static retrieval pipelines toward adaptive and learned control over retrieval
and evidence use. Agentic-R \citep{liu2026agentic} learns retrievers for
multi-turn agentic search by jointly considering local retrieval relevance and
downstream answer correctness, while REPAIR \citep{kim2026adaptive} uses
reasoning plans to support mid-course adaptation during iterative retrieval.
RAG-GD \citep{li2026context} formulates retrieval-augmented generation as an
in-context optimization process and learns lightweight, context-conditioned
updates to how retrieved evidence is used, while keeping both the retriever and
language-model backbone frozen. RICE-PO \citep{li2026rice} further addresses
credit assignment in interactive retrieval agents by transforming intermediate
retrieval interactions into localized learning signals for policy optimization.
Related work has also explored adaptive control beyond retrieval itself.
EvolveRouter \citep{huang2026evolverouter} jointly adapts agent routing and
prompting for multi-agent question answering, while Counterfactual Graph
Calibration \citep{huang2026counterfactual} uses calibrated confidence to
improve communication and topology selection among multiple LLM agents.
These methods demonstrate the broader value of learned control over retrieval,
agent selection, and communication structure.
Long-horizon Deep Research introduces a further trajectory-level challenge:
a sequence of individually reasonable retrieval and reasoning decisions can
still accumulate into an inefficient or unproductive research trajectory.
\textsc{RIVO} targets this problem by learning when trajectory correction is
needed and, once triggered, applying a lightweight logit intervention to the
frozen Host model to redirect subsequent reasoning and retrieval toward a more
promising research direction.

%% file: 7_proof.tex
% !TeX root = iclr2027_conference.tex
\section{Proof of Main  Theorem}
\label{proof}
% \section{Theoretical Analysis}
We establish the theoretical properties of the RIVO logit intervention,
showing how token-level corrections can shift the generation distribution
toward higher-value reroute queries. By modeling the logit modification as
a function of the Plugin parameters $\theta$, we analyze how
trajectory-level reward learning induces changes in reroute-query
probabilities and, under suitable conditions, improves the expected
downstream value of the generated queries.

Consider a Deep Research turn $t$ at which \textsc{Reroute} is triggered.
Let $\mathcal{F}_{t-1}$ denote the research history available before
generating the reroute query at turn $t$, and let
$\mathcal{F}_{t,s-1}$ denote the corresponding history before generating
token position $s$ of that query. Let
$\theta\in\Theta\subseteq\mathbb{R}^d$ denote the trainable Plugin
parameters, and let $\mathbf{z}_{t,s}^{H}$ denote the vocabulary-level
logits produced by the frozen Host at token position $s$.
For each candidate token $x_j\in\mathcal{V}$, the Plugin produces an
effective logit modification $g_{t,s,j}(\theta)$, which modifies the Host
distribution used to generate the reroute query. Before presenting the
main theoretical result, we first introduce the assumptions required for
the analysis.
% \begin{assumption}[Smooth Logit Intervention]
% \label{auusmption1}
% Let $\theta$ denote the trainable Plugin parameters, and let
% $x_j \in \mathcal{V}$ denote the $j$-th candidate vocabulary token.
% Let $\Delta z_{t,j}(\theta;\mathbf{z}_t^H,\mathcal{F}_{t-1})$ denote
% the Plugin-induced logit modification for $x_j$ at generation step $t$,
% where $\mathbf{z}_t^H$ denotes the frozen host logits and
% $\mathcal{F}_{t-1}$ denotes the reroute-query history. Let
% $g_t(\theta;\mathbf{z}_t^H,x_j,\mathcal{F}_{t-1})$ be the Plugin
% function that determines $\Delta z_{t,j}$.
% %%%%
% We assume that, for some constants $\lambda_0,\lambda_1>0$, $\left|\nabla_\theta g_t\right| \le \lambda_0$ and $\left|\nabla_\theta^2 g_t\right| \le \lambda_1$ for all generation steps $t$ and candidate tokens $x_j$.
% %%
% Assumption 1 assumes that the history-dependent logit intervention has
% bounded gradient and Hessian, similar to standard smoothness assumptions
% used in prior analyses of deep models
% \cite{zhang2024transformers,hiranandani2025logits}.
% \end{assumption}

\begin{assumption}[Bounded Logit Intervention]
\label{assumption:bounded_intervention}

Let $g_{t,s,j}(\theta)$ denote the effective Plugin-induced logit
correction applied to candidate token $x_j\in\mathcal{V}$ at token
position $s$ of the reroute query generated at Deep Research turn $t$.
We assume that there exists a constant $G>0$ such that
$|g_{t,s,j}(\theta)|\leq G$
for all relevant Deep Research turns $t$, token positions $s$, and
candidate tokens $x_j\in\mathcal{V}$.
%%%
We further assume that the reroute-query length is bounded by
$L_t\leq L_{\max}$ and that the downstream value is bounded as
$|Q_t(q_t)|\leq Q_{\max}$.
\end{assumption}

\begin{assumption}[Value Alignment with Downstream Return]
\label{assumption:value_alignment}

Let
$q_t=(x_{t,1},\ldots,x_{t,L_t})$
denote the complete reroute query generated at Deep Research turn $t$.
We define its downstream value as
\begin{equation}
Q_t(q_t)
=
\mathbb{E}
\left[
R(\tau_{1:T})
\mid
q_t,\mathcal{F}_{t-1}
\right],
\end{equation}
where $\tau_{1:T}$ denotes the complete Deep Research trajectory.
We further define the first-order query-level probability shift as
\begin{equation}
S_t(\theta;q_t)
=
\left.
\frac{\partial}{\partial\beta}
\log
\pi_{\beta,\theta}
(q_t\mid\mathcal{F}_{t-1})
\right|_{\beta=0}.
\end{equation}

We assume that the chosen intervention direction is locally aligned with
downstream value, such that
\begin{equation}
\beta\,
\operatorname{Cov}_{q_t\sim
\pi_H(\cdot\mid\mathcal{F}_{t-1})}
\left(
Q_t(q_t),
S_t(\theta;q_t)
\right)
>0.
\end{equation}
\end{assumption}

\begin{lemma}[Query-Distribution Shift under Logit Intervention]
\label{lemma:query_distribution_shift}

Consider a reroute query
$q_t=(x_{t,1},\ldots,x_{t,L_t})$
generated at Deep Research turn $t$ when rerouting is triggered.
At token position $s$, let $\mathbf{z}_{t,s}^{H}$ denote the logits
produced by the frozen Host model, and let
$g_{t,s,j}(\theta)$ denote the Plugin-induced logit modification for
candidate token $x_j\in\mathcal{V}$.
For an intervention strength $\beta\in\mathbb{R}$, define the induced
token policy as
\begin{equation}
\pi_{\beta,\theta}
(x_j\mid\mathcal{F}_{t,s-1})
=
\frac{
\exp\!\left(
z_{t,s,j}^{H}
+
\beta g_{t,s,j}(\theta)
\right)
}{
\sum_{k\in\mathcal{V}}
\exp\!\left(
z_{t,s,k}^{H}
+
\beta g_{t,s,k}(\theta)
\right)
}.
\end{equation}

Then $\pi_{\beta,\theta}$ is differentiable with respect to $\beta$,
and
\begin{equation}
\frac{\partial}{\partial\beta}
\log
\pi_{\beta,\theta}
(x_j\mid\mathcal{F}_{t,s-1})
=
g_{t,s,j}(\theta)
-
\mathbb{E}_{x_k\sim\pi_{\beta,\theta}}
\left[
g_{t,s,k}(\theta)
\mid
\mathcal{F}_{t,s-1}
\right].
\end{equation}

Equivalently,
\begin{equation}
\frac{\partial}{\partial\beta}
\pi_{\beta,\theta}
(x_j\mid\mathcal{F}_{t,s-1})
=
\pi_{\beta,\theta}
(x_j\mid\mathcal{F}_{t,s-1})
\left(
g_{t,s,j}(\theta)
-
\mathbb{E}_{x_k\sim\pi_{\beta,\theta}}
[g_{t,s,k}(\theta)]
\right).
\end{equation}

Consequently, the probability of the complete reroute query satisfies
\begin{equation}
\pi_{\beta,\theta}
(q_t\mid\mathcal{F}_{t-1})
=
\prod_{s=1}^{L_t}
\pi_{\beta,\theta}
(x_{t,s}\mid\mathcal{F}_{t,s-1}),
\end{equation}
and its first-order probability shift is characterized by
\begin{equation}
S_t(\theta;q_t)
:=
\left.
\frac{\partial}{\partial\beta}
\log
\pi_{\beta,\theta}
(q_t\mid\mathcal{F}_{t-1})
\right|_{\beta=0}.
\end{equation}
\end{lemma}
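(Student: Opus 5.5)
The plan is to compute the derivative of the log-softmax directly and then lift the token-level identity to the query level by the autoregressive factorization. Fix a turn $t$, a position $s$, and the history $\mathcal{F}_{t,s-1}$. Write $u_j(\beta)=z^H_{t,s,j}+\beta g_{t,s,j}(\theta)$. Then
\begin{equation*}
\log\pi_{\beta,\theta}(x_j\mid\mathcal{F}_{t,s-1})=u_j(\beta)-\log\sum_{k\in\mathcal{V}}\exp\bigl(u_k(\beta)\bigr).
\end{equation*}
Since $\mathcal{V}$ is finite and each $u_k$ is affine in $\beta$, the log-sum-exp is a finite composition of smooth functions with a strictly positive argument. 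Differentiability in $\beta$, in fact $C^\infty$ smoothness, is therefore immediate. Assumption~\ref{assumption:bounded_intervention} is not even needed here; it will only matter later, to control $J_t''$ and moment terms.

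The first identity follows from the chain rule. The derivative of $u_j$ is $g_{t,s,j}(\theta)$. The derivative of the log-normalizer is
\begin{equation*}
\frac{\sum_k g_{t,s,k}(\theta)\exp(u_k)}{\sum_k \exp(u_k)}=\sum_k \pi_{\beta,\theta}(x_k\mid\mathcal{F}_{t,s-1})\,g_{t,s,k}(\theta),
\end{equation*}
which is exactly $\mathbb{E}_{x_k\sim\pi_{\beta,\theta}}[g_{t,s,k}(\theta)\mid\mathcal{F}_{t,s-1}]$. The second, "equivalent", form then follows from $\partial_\beta \pi=\pi\,\partial_\beta\log\pi$, which is valid because $\pi>0$.

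For the query-level statement, the product formula is the autoregressive definition of the sequence distribution. The one subtlety is that the history $\mathcal{F}_{t,s-1}$ contains the earlier query tokens $x_{t,1},\ldots,x_{t,s-1}$. For a fixed query $q_t$, these conditioning contexts are fixed and do not depend on $\beta$, and neither do the Host logits $\mathbf{z}^H_{t,s}$ nor the plugin outputs $g_{t,s,j}(\theta)$ computed from those contexts. The main point to state carefully is this independence: the plugin correction is a function of $\theta$ and the context only, not of $\beta$.

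Given that independence, taking logs turns the product into a finite sum of differentiable terms, and differentiating termwise gives
\begin{equation*}
S_t(\theta;q_t)=\sum_{s=1}^{L_t}\Bigl(g_{t,s,x_{t,s}}(\theta)-\mathbb{E}_{x_k\sim\pi_H(\cdot\mid\mathcal{F}_{t,s-1})}\bigl[g_{t,s,k}(\theta)\bigr]\Bigr),
\end{equation*}
evaluated at $\beta=0$, where $\pi_{0,\theta}=\pi_H$. This makes the definition of $S_t$ well posed.

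As a remark, I would add that Assumption~\ref{assumption:bounded_intervention} yields $|S_t|\le 2GL_{\max}$. That bound is what is later needed to justify differentiating $J_t$ under the (finite or dominated) expectation, for equation~(\ref{eq:first_order_value}) and Theorem~\ref{thm:local_improvement}.
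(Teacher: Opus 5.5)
Your proposal is correct and follows essentially the same route as the paper's proof. You differentiate the log-softmax to obtain $g_{t,s,j}(\theta)-\mathbb{E}_{x_k\sim\pi_{\beta,\theta}}[g_{t,s,k}(\theta)]$, pass to the probability form via $\partial_\beta\pi=\pi\,\partial_\beta\log\pi$, and then sum over the autoregressive factorization and evaluate at $\beta=0$, where $\pi_{0,\theta}=\pi_H$. Beyond that, you make explicit two points the paper leaves implicit: smoothness follows from the finite vocabulary, and the contexts and $g_{t,s,j}(\theta)$ do not depend on $\beta$ (this also covers the Top-$K$ mask, which is determined by $\Delta\mathbf{z}_{t,s}(\theta)$ alone).
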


%%%%Proof1  is begaining=====%%%%%%%

%%%%%%%%%%%
\begin{proof}

\emph{Token-level probability shift.}
For a candidate token $x_j\in\mathcal V$, the intervened token policy is
\begin{equation}
\pi_{\beta,\theta}(x_j\mid\mathcal F_{t,s-1})
=
\frac{
\exp\!\left(
z^H_{t,s,j}+\beta g_{t,s,j}(\theta)
\right)
}{
\sum_{k\in\mathcal V}
\exp\!\left(
z^H_{t,s,k}+\beta g_{t,s,k}(\theta)
\right)
}.
\end{equation}

Taking logarithms gives
\begin{equation}
\log \pi_{\beta,\theta}(x_j\mid\mathcal F_{t,s-1})
=
z^H_{t,s,j}
+
\beta g_{t,s,j}(\theta)
-
\log
\sum_{k\in\mathcal V}
\exp\!\left(
z^H_{t,s,k}
+
\beta g_{t,s,k}(\theta)
\right).
\end{equation}

Differentiating with respect to $\beta$ yields
\begin{equation}
\frac{\partial}{\partial\beta}
\log \pi_{\beta,\theta}(x_j\mid\mathcal F_{t,s-1})
=
g_{t,s,j}(\theta)
-
\sum_{k\in\mathcal V}
\pi_{\beta,\theta}(x_k\mid\mathcal F_{t,s-1})
g_{t,s,k}(\theta).
\end{equation}

This derivative characterizes the local sensitivity of the token
probability to the intervention strength $\beta$, indicating the
direction and rate of probability change with respect to $\beta$.
The second term is the probability-weighted expectation of the
Plugin-induced logit modification:
\begin{equation}
\sum_{k\in\mathcal V}
\pi_{\beta,\theta}(x_k\mid\mathcal F_{t,s-1})
g_{t,s,k}(\theta)
=
\mathbb E_{x_k\sim\pi_{\beta,\theta}}
\left[
g_{t,s,k}(\theta)
\mid
\mathcal F_{t,s-1}
\right].
\end{equation}

Hence,
\begin{equation}
\frac{\partial}{\partial\beta}
\log \pi_{\beta,\theta}(x_j\mid\mathcal F_{t,s-1})
=
g_{t,s,j}(\theta)
-
\mathbb E_{x_k\sim\pi_{\beta,\theta}}
\left[
g_{t,s,k}(\theta)
\mid
\mathcal F_{t,s-1}
\right].
\end{equation}

Using
\begin{equation}
\frac{\partial}{\partial\beta}\pi_{\beta,\theta}
=
\pi_{\beta,\theta}
\frac{\partial}{\partial\beta}\log\pi_{\beta,\theta},
\end{equation}
we obtain
\begin{equation}
\frac{\partial}{\partial\beta}
\pi_{\beta,\theta}(x_j\mid\mathcal F_{t,s-1})
=
\pi_{\beta,\theta}(x_j\mid\mathcal F_{t,s-1})
\left(
g_{t,s,j}(\theta)
-
\mathbb E_{x_k\sim\pi_{\beta,\theta}}
\left[
g_{t,s,k}(\theta)
\mid
\mathcal F_{t,s-1}
\right]
\right).
\end{equation}

This identity gives a direct interpretation of the probability shift.
For brevity, let $\mathbb{E}[g]$ denote the probability-weighted average
Plugin-induced logit correction over the vocabulary.
Then $g_{t,s,j}(\theta)-\mathbb{E}[g]$ represents the relative correction
assigned to token $x_j$ with respect to the current token distribution.
Consequently, the change in its probability is determined by both its current
probability $\pi_{\beta,\theta}(x_j)$ and this relative correction.
In particular, $g_{t,s,j}(\theta)>\mathbb{E}[g]$ implies
$\partial\pi_{\beta,\theta}(x_j)/\partial\beta>0$, whereas
$g_{t,s,j}(\theta)<\mathbb{E}[g]$ implies
$\partial\pi_{\beta,\theta}(x_j)/\partial\beta<0$.
Thus, a positive change in $\beta$ shifts probability mass toward tokens
receiving above-average logit corrections, whereas a negative change in
$\beta$ reverses this direction.

\emph{Query-level probability shift.}
Let
\begin{equation}
q_t=(x_{t,1},x_{t,2},\ldots,x_{t,L_t})
\end{equation}
denote a complete reroute query. By autoregressive factorization,
\begin{equation}
\pi_{\beta,\theta}(q_t\mid\mathcal F_{t-1})
=
\prod_{s=1}^{L_t}
\pi_{\beta,\theta}
(x_{t,s}\mid\mathcal F_{t,s-1}).
\end{equation}

Taking logarithms,
\begin{equation}
\log
\pi_{\beta,\theta}(q_t\mid\mathcal F_{t-1})
=
\sum_{s=1}^{L_t}
\log
\pi_{\beta,\theta}
(x_{t,s}\mid\mathcal F_{t,s-1}).
\end{equation}

Differentiating with respect to $\beta$ gives
\begin{equation}
\frac{\partial}{\partial\beta}
\log
\pi_{\beta,\theta}(q_t\mid\mathcal F_{t-1})
=
\sum_{s=1}^{L_t}
\frac{\partial}{\partial\beta}
\log
\pi_{\beta,\theta}
(x_{t,s}\mid\mathcal F_{t,s-1}).
\end{equation}

Applying the token-level result yields
\begin{equation}
\frac{\partial}{\partial\beta}
\log
\pi_{\beta,\theta}(q_t\mid\mathcal F_{t-1})
=
\sum_{s=1}^{L_t}
\left[
g_{t,s,x_{t,s}}(\theta)
-
\mathbb E_{x_k\sim\pi_{\beta,\theta}}
\left[
g_{t,s,k}(\theta)
\mid
\mathcal F_{t,s-1}
\right]
\right].
\end{equation}

\emph{First-order shift around the host policy.}
At $\beta=0$, the intervention vanishes and
$\pi_{0,\theta}=\pi_H$. Therefore,
\begin{equation}
\left.
\frac{\partial}{\partial\beta}
\log
\pi_{\beta,\theta}(q_t\mid\mathcal F_{t-1})
\right|_{\beta=0}
=
\sum_{s=1}^{L_t}
\left[
g_{t,s,x_{t,s}}(\theta)
-
\mathbb E_{x_k\sim\pi_H}
\left[
g_{t,s,k}(\theta)
\mid
\mathcal F_{t,s-1}
\right]
\right].
\end{equation}

We define the resulting first-order query-level shift as
\begin{equation}
S_t(\theta;q_t)
:=
\left.
\frac{\partial}{\partial\beta}
\log
\pi_{\beta,\theta}(q_t\mid\mathcal F_{t-1})
\right|_{\beta=0}.
\end{equation}

This establishes how the token-level logit intervention induces a
first-order shift in the probability of a complete reroute query around
the frozen host policy.

%%%%%%%  lemma 1 end%%%%%%%%%%%%%%
%%%%%%%%%%%%%%%%%%%
\end{proof}
\begin{lemma}[First-Order Downstream Return Characterization]
\label{lemma2:First-Order Downstream Return Characterization}
Fix the current history $\mathcal{F}_{t-1}$ and the downstream
continuation policy. Define
\begin{equation}
J_t(\beta)
:=
\mathbb{E}_{q_t\sim
\pi_{\beta,\theta}(\cdot\mid\mathcal{F}_{t-1})}
\left[
Q_t(q_t)
\right],
\end{equation}
where
\begin{equation}
Q_t(q_t)
:=
\mathbb{E}
\left[
R(\tau_{1:T})
\mid
\mathcal{F}_{t-1},q_t
\right].
\end{equation}
Here,  $J_t(\beta)$ denotes the expected terminal task reward when a logit intervention of strength $\beta$ is applied to the reroute query at Deep Research turn $t$, conditioned on the current history $\mathcal{F}_{t-1}$, and the agent subsequently completes the remaining Deep Research trajectory. $R(\tau_{1:T})$ denotes the terminal task reward evaluated after
the complete Deep Research trajectory terminates. Thus, $Q_t(q_t)$
measures the expected final task reward obtained after selecting
reroute query $q_t$ and completing the remaining retrieval and
reasoning process under the fixed downstream continuation policy.

Then,
For a general intervention strength $\beta$, define the query-level
score as
\begin{equation}
S_{t,\beta}(q_t)
:=
\frac{\partial}{\partial\beta}
\log
\pi_{\beta,\theta}
(q_t\mid\mathcal{F}_{t-1}).
\end{equation}
Then,
\begin{equation}
\frac{\partial J_t(\beta)}{\partial\beta}
=
\mathbb{E}_{q_t\sim
\pi_{\beta,\theta}(\cdot\mid\mathcal{F}_{t-1})}
\left[
Q_t(q_t)S_{t,\beta}(q_t)
\right].
\label{eq:general_first_order_return}
\end{equation}

At $\beta=0$, we have
$S_{t,0}(q_t)=S_t(\theta;q_t)$ and
$\pi_{0,\theta}=\pi_H$. Therefore,
\begin{equation}
\left.
\frac{\partial J_t(\beta)}{\partial\beta}
\right|_{\beta=0}
=
\mathbb{E}_{q_t\sim
\pi_H(\cdot\mid\mathcal{F}_{t-1})}
\left[
Q_t(q_t)S_t(\theta;q_t)
\right].
\end{equation}

Moreover,
\begin{equation}
\mathbb{E}_{q_t\sim
\pi_H(\cdot\mid\mathcal{F}_{t-1})}
\left[
S_t(\theta;q_t)
\right]
=0.
\end{equation}

Therefore,
\begin{equation}
\boxed{
\left.
\frac{\partial J_t(\beta)}{\partial\beta}
\right|_{\beta=0}
=
\operatorname{Cov}_{q_t\sim
\pi_H(\cdot\mid\mathcal{F}_{t-1})}
\left(
Q_t(q_t),
S_t(\theta;q_t)
\right)
}.
\end{equation}
\end{lemma}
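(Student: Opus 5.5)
The statement to prove is Lemma~\ref{lemma2:First-Order Downstream Return Characterization}. The natural route is the score-function (log-derivative) identity applied to the finite-support expectation $J_t(\beta)$. Three facts make this route clean:
\begin{itemize}
\item the continuation policy is fixed, so $Q_t(q_t)$ does not depend on $\beta$;
\item reroute queries have length at most $L_{\max}$ over the finite vocabulary $\mathcal V$, by Assumption~\ref{assumption:bounded_intervention};
\item the query space is therefore finite, and $J_t(\beta)=\sum_{q_t}\pi_{\beta,\theta}(q_t\mid\mathcal F_{t-1})\,Q_t(q_t)$ is a finite sum.
\end{itemize}
Each summand is a product of at most $L_{\max}$ softmax factors. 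By Lemma~\ref{lemma:query_distribution_shift} each factor is smooth in $\beta$, so differentiation can be exchanged with the sum without any dominated-convergence argument. If one prefers not to rely on finiteness of the query space, the same exchange is justified by $|Q_t|\le Q_{\max}$ together with $|S_{t,\beta}(q_t)|\le 2GL_{\max}$. The latter bound follows from the explicit formula in Lemma~\ref{lemma:query_distribution_shift}: each token term $g_{t,s,j}-\mathbb E[g]$ has absolute value at most $2G$.

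The key steps, in order, are as follows.
\begin{enumerate}
\item Differentiate term by term and use $\partial_\beta\pi_{\beta,\theta}(q_t)=\pi_{\beta,\theta}(q_t)\,\partial_\beta\log\pi_{\beta,\theta}(q_t)$. This gives $\partial_\beta J_t(\beta)=\mathbb E_{\pi_{\beta,\theta}}[Q_t S_{t,\beta}]$, which is \eqref{eq:general_first_order_return}.
\item Evaluate at $\beta=0$. Use $\pi_{0,\theta}=\pi_H$ and note that $S_{t,0}=S_t(\theta;\cdot)$ by definition \eqref{eq:query_shift}.
\item Prove the zero-mean property. Since $\sum_{q_t}\pi_{\beta,\theta}(q_t\mid\mathcal F_{t-1})=1$ for all $\beta$, differentiating the constant gives $0=\sum_{q_t}\partial_\beta\pi_{\beta,\theta}(q_t)=\mathbb E_{\pi_{\beta,\theta}}[S_{t,\beta}]$. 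Setting $\beta=0$ yields $\mathbb E_{\pi_H}[S_t]=0$.
\end{enumerate}

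For step 3 there is a concern about queries of variable length: the $q_t$ must form a prefix-free set of complete sequences terminated by an end token. To guard against this, I would give a second, token-level argument. At each position $s$, Lemma~\ref{lemma:query_distribution_shift} shows that the expected token score under $\pi_{\beta,\theta}(\cdot\mid\mathcal F_{t,s-1})$ is $\mathbb E[g]-\mathbb E[g]=0$. Summing over $s$ and applying the tower property then gives zero mean for the query-level score.

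Finally, $\operatorname{Cov}(Q_t,S_t)=\mathbb E[Q_tS_t]-\mathbb E[Q_t]\,\mathbb E[S_t]=\mathbb E[Q_tS_t]$, which is the boxed identity.

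The main obstacle is rigor rather than computation. The exchange of derivative and expectation must be justified, and it must be clear that $Q_t$ carries no hidden $\beta$-dependence. The latter matters because the rewritten search query and all later turns are generated by the frozen Host without intervention. I would state this explicitly as the meaning of ``fix the downstream continuation policy.''

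This lemma, together with a second-order Taylor expansion with Lagrange remainder, is what Theorem~\ref{thm:local_improvement} needs. The expansion gives $J_t(\beta)=J_t(0)+\beta\Gamma_t+\tfrac12J_t''(u)\beta^2$ for some $u$ between $0$ and $\beta$. Bounding the remainder by $B_J\beta^2/2$ gives the stated inequality. The strict improvement then follows from $\beta\Gamma_t=|\beta||\Gamma_t|>\tfrac{B_J}{2}\beta^2$ whenever $0<|\beta|<2|\Gamma_t|/B_J$.
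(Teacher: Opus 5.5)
Your proposal is correct and follows essentially the same route as the paper: expand $J_t(\beta)$ as a sum over reroute queries, differentiate term by term using that $Q_t$ does not depend on $\beta$, apply the log-derivative identity, obtain the zero-mean property by differentiating $\sum_{q_t}\pi_{\beta,\theta}(q_t\mid\mathcal F_{t-1})=1$, and conclude via $\operatorname{Cov}(Q_t,S_t)=\mathbb E[Q_tS_t]$. Two of your additions are slightly more careful than the paper, which only appeals briefly to boundedness, but neither changes the argument: you justify the interchange by noting the query space is finite (bounded length over a finite vocabulary), and you give a token-level tower-property check of the zero-mean step.
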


\begin{proof}
Since the current history $\mathcal{F}_{t-1}$ is fixed, expanding the
expectation over all possible reroute queries gives
\begin{equation}
J_t(\beta)
=
\sum_{q_t}
\pi_{\beta,\theta}
\left(
q_t\mid\mathcal{F}_{t-1}
\right)
Q_t(q_t).
\end{equation}
Under Assumption~\ref{assumption:bounded_intervention}, the relevant
quantities are bounded, allowing differentiation to be interchanged with
the expectation over reroute queries.
Because the downstream continuation policy is fixed, $Q_t(q_t)$ does
not depend on $\beta$. Differentiating with respect to $\beta$ therefore
gives
\begin{equation}
\frac{\partial J_t(\beta)}{\partial\beta}
=
\sum_{q_t}
Q_t(q_t)
\frac{\partial}{\partial\beta}
\pi_{\beta,\theta}
\left(
q_t\mid\mathcal{F}_{t-1}
\right).
\end{equation}

Using the identity
\begin{equation}
\frac{\partial}{\partial\beta}
\pi_{\beta,\theta}
\left(
q_t\mid\mathcal{F}_{t-1}
\right)
=
\pi_{\beta,\theta}
\left(
q_t\mid\mathcal{F}_{t-1}
\right)
\frac{\partial}{\partial\beta}
\log
\pi_{\beta,\theta}
\left(
q_t\mid\mathcal{F}_{t-1}
\right),
\end{equation}
we obtain
\begin{equation}
\frac{\partial J_t(\beta)}{\partial\beta}
=
\mathbb{E}_{q_t\sim
\pi_{\beta,\theta}(\cdot\mid\mathcal{F}_{t-1})}
\left[
Q_t(q_t)
\frac{\partial}{\partial\beta}
\log
\pi_{\beta,\theta}
\left(
q_t\mid\mathcal{F}_{t-1}
\right)
\right].
\end{equation}
The expectation above is a probability-weighted average over all possible
reroute queries, where each query $q_t$ is weighted by its current
probability $\pi_{\beta,\theta}(q_t\mid\mathcal F_{t-1})$.
Evaluating at $\beta=0$, where
$\pi_{0,\theta}=\pi_H$, and using the definition
of $S_t(\theta;q_t)$ from Lemma~\ref{lemma:query_distribution_shift}, yields
\begin{equation}
\left.
\frac{\partial J_t(\beta)}{\partial\beta}
\right|_{\beta=0}
=
\mathbb{E}_{q_t\sim
\pi_H(\cdot\mid\mathcal{F}_{t-1})}
\left[
Q_t(q_t)S_t(\theta;q_t)
\right].
\end{equation}

Next, the query-level score has zero expectation under the host policy:
\begin{align}
\mathbb{E}_{q_t\sim
\pi_H(\cdot\mid\mathcal{F}_{t-1})}
\left[
S_t(\theta;q_t)
\right]
&=
\sum_{q_t}
\pi_H
\left(
q_t\mid\mathcal{F}_{t-1}
\right)
\left.
\frac{\partial}{\partial\beta}
\log
\pi_{\beta,\theta}
\left(
q_t\mid\mathcal{F}_{t-1}
\right)
\right|_{\beta=0}
\\
&=
\left.
\sum_{q_t}
\frac{\partial}{\partial\beta}
\pi_{\beta,\theta}
\left(
q_t\mid\mathcal{F}_{t-1}
\right)
\right|_{\beta=0}
\\
&=
\left.
\frac{\partial}{\partial\beta}
\sum_{q_t}
\pi_{\beta,\theta}
\left(
q_t\mid\mathcal{F}_{t-1}
\right)
\right|_{\beta=0}
\\
&=
\frac{\partial}{\partial\beta}1
=
0.
\end{align}

Therefore, by the definition of covariance,
\begin{align}
&\operatorname{Cov}_{q_t\sim
\pi_H(\cdot\mid\mathcal{F}_{t-1})}
\left(
Q_t(q_t),
S_t(\theta;q_t)
\right)
\nonumber\\
&\quad=
\mathbb{E}_{q_t\sim
\pi_H(\cdot\mid\mathcal{F}_{t-1})}
\left[
Q_t(q_t)S_t(\theta;q_t)
\right]
\nonumber\\
&\qquad-
\mathbb{E}_{q_t\sim
\pi_H(\cdot\mid\mathcal{F}_{t-1})}
\left[
Q_t(q_t)
\right]
\mathbb{E}_{q_t\sim
\pi_H(\cdot\mid\mathcal{F}_{t-1})}
\left[
S_t(\theta;q_t)
\right].
\end{align}

Since
\begin{equation}
\mathbb{E}_{q_t\sim
\pi_H(\cdot\mid\mathcal{F}_{t-1})}
\left[
S_t(\theta;q_t)
\right]
=0,
\end{equation}
it follows that
\begin{equation}
\operatorname{Cov}_{q_t\sim
\pi_H(\cdot\mid\mathcal{F}_{t-1})}
\left(
Q_t(q_t),
S_t(\theta;q_t)
\right)
=
\mathbb{E}_{q_t\sim
\pi_H(\cdot\mid\mathcal{F}_{t-1})}
\left[
Q_t(q_t)S_t(\theta;q_t)
\right].
\end{equation}

Combining the above identities gives
\begin{equation}
\boxed{
\left.
\frac{\partial J_t(\beta)}{\partial\beta}
\right|_{\beta=0}
=
\operatorname{Cov}_{q_t\sim
\pi_H(\cdot\mid\mathcal{F}_{t-1})}
\left(
Q_t(q_t),
S_t(\theta;q_t)
\right)
}.
\end{equation}
The left-hand side, $\left.\partial J_t(\beta)/\partial\beta\right|_{\beta=0}$, measures the first-order change in the expected final task performance when the logit intervention is infinitesimally activated from the host policy. The right-hand side, $\operatorname{Cov}(Q_t(q_t),S_t(\theta;q_t))$, measures the alignment between the final value of a reroute query and the direction in which the Plugin shifts its probability. Here, $S_t(\theta;q_t)$ denotes the first-order shift score of the complete reroute query $q_t$, defined as the derivative of its log-probability with respect to the intervention strength $\beta$ at $\beta=0$. Intuitively, $S_t(\theta;q_t)>0$ means that activating the Plugin tends to increase the probability of generating $q_t$, whereas $S_t(\theta;q_t)<0$ means that the query is relatively suppressed. Its magnitude reflects how strongly the probability of $q_t$ responds to the logit intervention.   The left-hand side,
$\left.\partial J_t(\beta)/\partial\beta\right|_{\beta=0}$,
measures the first-order change in expected downstream performance as the
intervention strength moves away from the frozen Host policy.
The covariance on the right-hand side measures the alignment between the
downstream value of a reroute query and its first-order probability shift.
Specifically, $S_t(\theta;q_t)>0$ means that the probability of $q_t$
increases as $\beta$ increases from zero, whereas
$S_t(\theta;q_t)<0$ means that it decreases.
Therefore, the sign of
$\operatorname{Cov}(Q_t,S_t)$ determines the locally improving direction
of the intervention. A positive covariance favors increasing $\beta$,
whereas a negative covariance favors decreasing $\beta$.
Consequently, an intervention direction satisfying
$\beta\,\operatorname{Cov}(Q_t,S_t)>0$
produces a positive first-order change in expected downstream value.
This completes the proof.
\end{proof}

%%%%%%%%%Leamma 3%%%%%%%%
\begin{lemma}[Bounded Second-Order Variation of Downstream Return]
\label{lemma3}
Assume that the Plugin-induced logit modification is uniformly bounded,
\begin{equation}
\left|g_{t,s,j}(\theta)\right| \le G,
\end{equation}
for all token positions $s$, candidate tokens $x_j$, and relevant
Plugin parameters $\theta$. Assume further that the reroute-query length
is bounded by $L_t\le L_{\max}$ and that
\begin{equation}
|Q_t(q_t)|\le Q_{\max}.
\end{equation}
Then there exists a finite constant $B_J>0$ such that
\begin{equation}
\left|
\frac{\partial^2 J_t(\beta)}{\partial\beta^2}
\right|
\le B_J
\end{equation}
for $\beta$ in a neighborhood of zero. In particular, one may take
\begin{equation}
B_J
=
Q_{\max}G^2
\left(
4L_{\max}^2+L_{\max}
\right).
\end{equation}
\end{lemma}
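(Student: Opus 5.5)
We prove Lemma~\ref{lemma3} by differentiating the representation $J_t(\beta)=\sum_{q_t}\pi_{\beta,\theta}(q_t\mid\mathcal F_{t-1})Q_t(q_t)$ twice under the sum. We then bound the resulting integrand pointwise using only the token-level structure from Lemma~\ref{lemma:query_distribution_shift}. Applying the log-derivative identity twice gives
\begin{equation}
\frac{\partial^2 J_t(\beta)}{\partial\beta^2}
=
\mathbb{E}_{q_t\sim\pi_{\beta,\theta}}
\left[
Q_t(q_t)\left(S_{t,\beta}(q_t)^2+\partial_\beta S_{t,\beta}(q_t)\right)
\right].
\end{equation}
This holds because $\partial_\beta^2\pi=\pi\,(\partial_\beta\log\pi)^2+\pi\,\partial_\beta^2\log\pi$. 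Since $|Q_t|\le Q_{\max}$ and $\pi_{\beta,\theta}$ is a probability distribution, it suffices to show the pointwise bounds $|S_{t,\beta}(q_t)|\le 2L_{\max}G$ and $|\partial_\beta S_{t,\beta}(q_t)|\le L_{\max}G^2$. These give $B_J=Q_{\max}G^2(4L_{\max}^2+L_{\max})$, exactly the stated constant.

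For the first bound, Lemma~\ref{lemma:query_distribution_shift} gives, at general $\beta$,
\begin{equation}
S_{t,\beta}(q_t)=\sum_{s=1}^{L_t}\left(g_{t,s,x_{t,s}}(\theta)-\mathbb{E}_{\pi_{\beta,\theta}}[g_{t,s,\cdot}(\theta)\mid\mathcal F_{t,s-1}]\right).
\end{equation}
Each summand is a difference of two quantities bounded by $G$ in absolute value, so it is at most $2G$. Summing over $L_t\le L_{\max}$ positions gives $|S_{t,\beta}|\le 2L_{\max}G$.

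For the second bound, we differentiate each summand once more. The term $g_{t,s,x_{t,s}}(\theta)$ does not depend on $\beta$. By the softmax-derivative identity of Lemma~\ref{lemma:query_distribution_shift}, the derivative of the expectation term is the variance $\operatorname{Var}_{\pi_{\beta,\theta}}(g_{t,s,\cdot}\mid\mathcal F_{t,s-1})$. Hence $\partial_\beta S_{t,\beta}=-\sum_s \operatorname{Var}(g_{t,s,\cdot})$. Each variance is at most $\mathbb{E}[g^2]\le G^2$, so the sum is at most $L_{\max}G^2$ in absolute value. One subtlety is that the prefixes $\mathcal F_{t,s-1}$ are fixed by the given $q_t$ and the history, and the logits $z^H$ and corrections $g$ do not depend on $\beta$. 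This is what makes the per-position computation valid.

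We expect the main obstacle to be justifying the interchange of $\partial_\beta^2$ with the sum over queries, which may range over a very large (though finite) space. We would argue it directly. The vocabulary is finite and the lengths satisfy $L_t\le L_{\max}$, so the query space is finite and the sum can be differentiated term by term. Each term is smooth in $\beta$ as a finite product of softmax probabilities. If one instead allows a countable space, the pointwise bounds above give a $\beta$-uniform dominating function $Q_{\max}G^2(4L_{\max}^2+L_{\max})\,\pi_{\beta,\theta}(q_t)$, and dominated convergence applies. Since all bounds are uniform in $\beta$, the estimate in fact holds for all $\beta$, and in particular in a neighborhood of zero. Combining this with Lemma~\ref{lemma2:First-Order Downstream Return Characterization} and Taylor's theorem with Lagrange remainder then yields Theorem~\ref{thm:local_improvement}.
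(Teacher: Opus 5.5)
Your proposal is correct and follows the paper's proof essentially step for step. It uses the same decomposition $J_t''(\beta)=\mathbb{E}_{\pi_{\beta,\theta}}[Q_t(S_{t,\beta}^2+\partial_\beta S_{t,\beta})]$, the same bounds $|S_{t,\beta}|\le 2L_{\max}G$ and $|\partial_\beta S_{t,\beta}|=\sum_s\operatorname{Var}(g_{t,s,\cdot})\le L_{\max}G^2$, and so the same constant $B_J$; your justification of the interchange of $\partial_\beta^2$ with the sum, via finiteness of the query space, is slightly more careful than the paper's loose appeal to boundedness.
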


\begin{proof}

\emph{Bounding the query-level score.}
Define the query-level score at intervention strength $\beta$ as
\begin{equation}
S_{t,\beta}(q_t)
:=
\frac{\partial}{\partial\beta}
\log
\pi_{\beta,\theta}(q_t\mid\mathcal F_{t-1}).
\end{equation}

From Lemma~\ref{lemma:query_distribution_shift},
\begin{equation}
S_{t,\beta}(q_t)
=
\sum_{s=1}^{L_t}
\left[
g_{t,s,x_{t,s}}(\theta)
-
\mathbb E_{x_k\sim\pi_{\beta,\theta}}
\left[
g_{t,s,k}(\theta)
\mid
\mathcal F_{t,s-1}
\right]
\right].
\end{equation}

Since $|g_{t,s,j}(\theta)|\le G$, we have
\begin{equation}
\left|
g_{t,s,x_{t,s}}(\theta)
-
\mathbb E_{x_k\sim\pi_{\beta,\theta}}
\left[
g_{t,s,k}(\theta)
\mid
\mathcal F_{t,s-1}
\right]
\right|
\le 2G.
\end{equation}

Therefore,
\begin{equation}
\begin{aligned}
\left|
S_{t,\beta}(q_t)
\right|
&=
\left|
\sum_{s=1}^{L_t}
\left[
g_{t,s,x_{t,s}}(\theta)
-
\mathbb{E}_{x_k\sim\pi_{\beta,\theta}}
\left[
g_{t,s,k}(\theta)
\mid
\mathcal{F}_{t,s-1}
\right]
\right]
\right|
\\
&\le
\sum_{s=1}^{L_t}
\left|
g_{t,s,x_{t,s}}(\theta)
-
\mathbb{E}_{x_k\sim\pi_{\beta,\theta}}
\left[
g_{t,s,k}(\theta)
\mid
\mathcal{F}_{t,s-1}
\right]
\right|
\\
&\le
\sum_{s=1}^{L_t} 2G
\\
&=
2L_tG
\\
&\le
2L_{\max}G.
\end{aligned}
\end{equation}

\emph{Bounding the derivative of the query-level score.}

For each token position $s$, define
\begin{equation}
\mu_{t,s}(\beta)
:=
\mathbb{E}_{x_k\sim\pi_{\beta,\theta}}
\left[
g_{t,s,k}(\theta)
\mid
\mathcal F_{t,s-1}
\right].
\end{equation}

Since $g_{t,s,k}(\theta)$ does not explicitly depend on $\beta$,
\begin{equation}
\begin{aligned}
\frac{\partial}{\partial\beta}\mu_{t,s}(\beta)
&=
\sum_{k\in\mathcal V}
g_{t,s,k}(\theta)
\frac{\partial}{\partial\beta}
\pi_{\beta,\theta}
\left(
x_k\mid\mathcal F_{t,s-1}
\right)
\\
&=
\sum_{k\in\mathcal V}
\pi_{\beta,\theta}
\left(
x_k\mid\mathcal F_{t,s-1}
\right)
g_{t,s,k}(\theta)
\left(
g_{t,s,k}(\theta)-\mu_{t,s}(\beta)
\right)
\\
&=
\mathbb{E}_{x_k\sim\pi_{\beta,\theta}}
\left[
g_{t,s,k}(\theta)^2
\mid
\mathcal F_{t,s-1}
\right]
-
\mu_{t,s}(\beta)^2
\\
&=
\operatorname{Var}_{x_k\sim\pi_{\beta,\theta}}
\left(
g_{t,s,k}(\theta)
\mid
\mathcal F_{t,s-1}
\right).
\end{aligned}
\end{equation}

Since $|g_{t,s,k}(\theta)|\le G$,
\begin{equation}
\operatorname{Var}_{x_k\sim\pi_{\beta,\theta}}
\left(
g_{t,s,k}(\theta)
\mid
\mathcal F_{t,s-1}
\right)
\le G^2.
\end{equation}

Therefore,
\begin{equation}
\frac{\partial}{\partial\beta}
S_{t,\beta}(q_t)
=
-
\sum_{s=1}^{L_t}
\operatorname{Var}_{x_k\sim\pi_{\beta,\theta}}
\left(
g_{t,s,k}(\theta)
\mid
\mathcal F_{t,s-1}
\right).
\end{equation}

Hence,
\begin{equation}
\left|
\frac{\partial}{\partial\beta}
S_{t,\beta}(q_t)
\right|
\le
L_tG^2
\le
L_{\max}G^2.
\end{equation}

\emph{Bounding the second-order variation of the downstream return.}
From Lemma~\ref{lemma2:First-Order Downstream Return Characterization},
\begin{equation}
\frac{\partial J_t(\beta)}{\partial\beta}
=
\mathbb E_{q_t\sim\pi_{\beta,\theta}}
\left[
Q_t(q_t)S_{t,\beta}(q_t)
\mid
\mathcal F_{t-1}
\right].
\end{equation}

Differentiating once more with respect to $\beta$ gives
\begin{equation}
\frac{\partial^2 J_t(\beta)}{\partial\beta^2}
=
\mathbb E_{q_t\sim\pi_{\beta,\theta}}
\left[
Q_t(q_t)
\left(
S_{t,\beta}(q_t)^2
+
\frac{\partial}{\partial\beta}
S_{t,\beta}(q_t)
\right)
\middle|
\mathcal F_{t-1}
\right].
\end{equation}

Using $|Q_t(q_t)|\le Q_{\max}$ together with the bounds above,

\begin{equation}
\begin{aligned}
\left|
\frac{\partial^2 J_t(\beta)}{\partial\beta^2}
\right|
&=
\left|
\mathbb{E}_{q_t\sim\pi_{\beta,\theta}}
\left[
Q_t(q_t)
\left(
S_{t,\beta}(q_t)^2
+
\frac{\partial}{\partial\beta}
S_{t,\beta}(q_t)
\right)
\right]
\right|
\\
&\le
\mathbb{E}_{q_t\sim\pi_{\beta,\theta}}
\left[
\left|Q_t(q_t)\right|
\left(
\left|S_{t,\beta}(q_t)\right|^2
+
\left|
\frac{\partial}{\partial\beta}
S_{t,\beta}(q_t)
\right|
\right)
\right]
\\
&\le
Q_{\max}
\left(
(2L_{\max}G)^2
+
L_{\max}G^2
\right)
\\
&=
Q_{\max}
\left(
4L_{\max}^2G^2
+
L_{\max}G^2
\right)
\\
&=
Q_{\max}G^2
\left(
4L_{\max}^2+L_{\max}
\right).
\end{aligned}
\end{equation}
Thus, defining
\begin{equation}
B_J
:=
Q_{\max}G^2
\left(
4L_{\max}^2+L_{\max}
\right),
\end{equation}
we obtain
\begin{equation}
\boxed{
\left|
\frac{\partial^2 J_t(\beta)}{\partial\beta^2}
\right|
\le B_J
}.
\end{equation}

This establishes the bounded second-order variation of the expected
downstream return and completes the proof.
\end{proof}

% \begin{theorem}[Performance Improvement under Value-Aligned Logit Intervention]
% \label{thm:value_aligned_improvement}

% Suppose the \textit{Bounded Logit Intervention} assumption
% (Assumption~\ref{assumption:bounded_intervention}) and the
% \textit{Value Alignment with Downstream Return} assumption
% (Assumption~\ref{assumption:value_alignment}) hold, and the conditions
% of Lemma~\ref{lemma3} are satisfied.

% Let
% \begin{equation}
% C_t
% :=
% \operatorname{Cov}_{q_t\sim
% \pi_H(\cdot\mid\mathcal F_{t-1})}
% \left(
% Q_t(q_t),
% S_t(\theta;q_t)
% \right).
% \end{equation}

% If the chosen intervention direction satisfies
% \begin{equation}
% \beta C_t>0,
% \end{equation}
% and
% \begin{equation}
% \left|
% J_t''(u)
% \right|
% \leq B_J
% \end{equation}
% for all $u$ between $0$ and $\beta$, then
% \begin{equation}
% J_t(\beta)-J_t(0)
% \geq
% \beta C_t
% -
% \frac{B_J}{2}\beta^2.
% \end{equation}

% Consequently, for any nonzero $\beta$ satisfying
% \begin{equation}
% 0<|\beta|<\frac{2|C_t|}{B_J},
% \end{equation}
% we have
% \begin{equation}
% \boxed{
% J_t(\beta)>J_t(0)
% }.
% \end{equation}
% \end{theorem}

\begin{proof}[Proof of Theorem~\ref{thm:local_improvement}]

Let
\begin{equation}
C_t
:=
\operatorname{Cov}_{q_t\sim
\pi_H(\cdot\mid\mathcal F_{t-1})}
\left(
Q_t(q_t),
S_t(\theta;q_t)
\right).
\end{equation}

By Lemma~\ref{lemma2:First-Order Downstream Return Characterization},
\begin{equation}
J_t'(0)=C_t.
\end{equation}

By Assumption~\ref{assumption:value_alignment}, the chosen intervention
direction satisfies
\begin{equation}
\beta C_t>0.
\end{equation}

We now apply a second-order Taylor expansion of $J_t(\beta)$ around
$\beta=0$. For sufficiently small nonzero $\beta$, Taylor's theorem
with Lagrange remainder gives
\begin{equation}
J_t(\beta)
=
J_t(0)
+
\beta J_t'(0)
+
\frac{\beta^2}{2}J_t''(\xi),
\end{equation}
for some $\xi$ lying between $0$ and $\beta$.

Therefore,
\begin{equation}
J_t(\beta)-J_t(0)
=
\beta C_t
+
\frac{\beta^2}{2}J_t''(\xi).
\end{equation}

By Lemma~\ref{lemma3},
\begin{equation}
\left|J_t''(\xi)\right|\leq B_J,
\end{equation}
and hence
\begin{equation}
J_t''(\xi)\geq -B_J.
\end{equation}

It follows that
\begin{equation}
\boxed{
J_t(\beta)-J_t(0)
\geq
\beta C_t
-
\frac{B_J}{2}\beta^2
}.
\end{equation}

Since $\beta C_t>0$, the right-hand side is strictly positive whenever
\begin{equation}
0<|\beta|<\frac{2|C_t|}{B_J}.
\end{equation}

Therefore,
\begin{equation}
\boxed{
J_t(\beta)>J_t(0)
}.
\end{equation}

Thus, when the intervention direction is locally aligned with downstream
value, a sufficiently small logit intervention improves the expected
downstream value relative to the frozen Host policy.
\end{proof}